\documentclass[a4paper,fleqn]{cas-dc}

\usepackage[authoryear,longnamesfirst]{natbib}

\def\tsc#1{\csdef{#1}{\textsc{\lowercase{#1}}\xspace}}
\tsc{WGM}
\tsc{QE}

\newtheorem{proposition}{Proposition}
\newtheorem{definition}{Definition}
\newtheorem{corollary}{Corollary}
\newdefinition{rmk}{Remark}
\newproof{pf}{Proof}

\usepackage{listings}
\usepackage{xcolor}
\usepackage{multicol}

\definecolor{jsonString}{RGB}{42,0,255}      
\definecolor{jsonComment}{RGB}{34,139,34}    
\definecolor{jsonNum}{RGB}{170,9,111}        
\definecolor{jsonPunct}{RGB}{205,49,49}      
\definecolor{jsonDelim}{RGB}{20,105,176}     

\lstdefinelanguage{json}{
  basicstyle=\ttfamily\footnotesize,
  backgroundcolor=\color{gray!5},
  showstringspaces=false,
  breaklines=true,
  breakatwhitespace=true,
  frame=none,
  escapeinside={(*@}{@*)},
  stringstyle=\color{jsonString},
  morestring=[b]",
  commentstyle=\color{jsonComment}\itshape,
  morecomment=[l]{//},
  literate=
   *{0}{{{\color{jsonNum}0}}}{1}
    {1}{{{\color{jsonNum}1}}}{1}
    {2}{{{\color{jsonNum}2}}}{1}
    {3}{{{\color{jsonNum}3}}}{1}
    {4}{{{\color{jsonNum}4}}}{1}
    {5}{{{\color{jsonNum}5}}}{1}
    {6}{{{\color{jsonNum}6}}}{1}
    {7}{{{\color{jsonNum}7}}}{1}
    {8}{{{\color{jsonNum}8}}}{1}
    {9}{{{\color{jsonNum}9}}}{1}
    {:}{{{\color{jsonPunct}{:}}}}{1}
    {,}{{{\color{jsonPunct}{,}}}}{1}
    {\{}{{{\color{jsonDelim}{\{}}}}{1}
    {\}}{{{\color{jsonDelim}{\}}}}}{1}
    {[}{{{\color{jsonDelim}{[}}}}{1}
    {]}{{{\color{jsonDelim}{]}}}}{1},
}

\begin{document}
\let\WriteBookmarks\relax
\def\floatpagepagefraction{1}
\def\textpagefraction{.001}

\shorttitle{}    

\shortauthors{G. MARRAKCHI, B. MATEI.}  

\title [mode = title]{The Large Cancer Assistant (LCA): A Model-Agnostic Orchestration Framework for Scalable Clinical Decision Support in Oncology}  



%
\author[1]{Ghassen MARRAKCHI}[type=editor,
      orcid=0009-0006-2538-783X]
\cormark[1]
\ead{ghassen.marrakchi@lipn.univ-paris13.fr}
\ead[url]{https://marrakchighassen.github.io/}
\credit{Conceptualization, Investigation, Methodology, Project administration, Resources, Software, Visualization, Formal analysis – original draft, Writing – original draft, Writing – review and editing.}

\author[1]{Basarab MATEI}[type=editor,
      orcid=0000-0001-7946-530X]
\ead{matei@lipn.univ-paris13.fr}
\ead[url]{https://lipn.univ-paris13.fr/~matei/}
\credit{Supervision, Validation, Funding acquisition, Formal analysis – review and editing, Writing – review and editing.}

\affiliation[1]{organization={LIPN, CNRS, UMR 7030 --- Université Sorbonne Paris Nord},
            addressline={99 Av. Jean Baptiste Clément},
            city={Villetaneuse},
            postcode={F-93430},
            country={France}}

\cortext[1]{Corresponding author}


\begin{abstract}
\mbox{}

\textbf{Objective:} Multimodal deep learning models in oncology are currently limited by monolithic designs that rigidly couple data ingestion, clinical routing, and artificial intelligence (AI) inference. To address this inflexibility, we propose the Large Cancer Assistant (LCA), a model-agnostic, post-hoc orchestration framework designed for scalable clinical decision support.  

\textbf{Methods:} The LCA is mathematically formalized as a 7-tuple architecture grounded in the principle of Algorithmic Impermeability, ensuring the orchestration logic remains strictly independent of underlying black-box AI models. We introduce the Entry Theory, leveraging Geometric Deep Learning (GDL) to standardize multimodal patient data along distinct structural and medical axes. The system dynamically orchestrates data via a Cancer Switching Module and intentionally isolates the core AI execution from volatile hospital IT infrastructures by outputting a Standardized Intermediate Payload (SIP).  

\textbf{Results:} A Proof of Concept (PoC) validated the orchestration logic across four technical scenarios. The framework executed a nominal flow with negligible orchestration overhead. It empirically demonstrated algorithmic impermeability by maintaining an invariant routing projection during AI model swaps, and it validated strict failure-safety by achieving a 100\% recall rate in generating targeted Supplementary Data Requests (SDR) under injected data anomalies. Multi-protocol execution capability was also successfully verified.

\textbf{Conclusion:} By structurally decoupling multimodal ingestion from feature inference, the LCA provides a highly adaptable and modular orchestration foundation. The SIP establishes a clear architectural boundary, natively setting the stage for downstream Electronic Medical Record (EMR) interoperability as an independent future paradigm.  
\end{abstract}


\begin{highlights}
\item LCA orchestrates multimodal oncology data without model coupling.
\item Algorithmic impermeability strictly isolates AI from data routing.
\item Entry Theory algebraically standardizes diverse clinical formats.
\item A Standardized Intermediate Payload (SIP) ensures EMR isolation.
\item Proof of concept proves 100\% failure safety and zero AI overhead.
\end{highlights}


\begin{keywords}
clinical decision support \sep multimodal orchestration \sep algorithmic impermeability \sep oncology informatics \sep interoperability \sep model-agnostic architecture
\end{keywords}

\maketitle


\section{Introduction}
\label{sec:intro}
The landscape of clinical oncology is inherently multimodal. A comprehensive patient diagnosis rarely relies on a single data source; rather, it requires the continuous synthesis of high-dimensional spatial imaging (e.g., computed tomography, magnetic resonance imaging), unstructured semantic histories (e.g., clinical notes, pathology reports), and tabular biological metrics (e.g., blood panels). Consequently, the integration of Artificial Intelligence (AI) into oncology has largely focused on developing multimodal deep learning models capable of processing these diverse data streams to improve diagnostic and prognostic accuracy.  

However, the clinical translation of these advanced models is currently constrained by systemic architectural flaws. Many existing multimodal systems remain tightly coupled to specific modality combinations and task definitions, limiting their flexibility across heterogeneous clinical workflows. They tightly couple the ingestion of data, the routing logic, and the core neural network inference into a single, rigid, monolithic pipeline. This monolithic design prevents dynamic adaptability: a model trained specifically for lung nodule segmentation cannot easily re-route an abdominal profile, nor can it dynamically adjust to missing data modalities without catastrophic failure. Furthermore, previous paradigms often conceptualized these systems for synchronous or concurrent execution. However, clinical realities demand a post-hoc, offline orchestration approach capable of asynchronously synthesizing highly fragmented patient histories without forcing artificial real-time constraints.  

Within the domain of biomedical informatics, the focus must fundamentally shift from the internal mechanics of individual predictive models to the overarching systems engineering that governs their clinical deployment. This paper explicitly addresses this gap by framing the challenge not as a computer vision or signal processing problem, but as an advanced informatics orchestration problem. By explicitly treating the underlying diagnostic AI networks as impermeable "black boxes," we isolate the computational inference from the clinical data flow. This perspective is vital for modern Clinical Decision Support Systems (CDSS), where the primary technical bottleneck is managing data heterogeneity and modular routing, rather than simply optimizing isolated algorithmic performance.

To address these structural limitations, this work presents a framework specification for the Large Cancer Assistant (LCA). Rather than proposing a singular diagnostic neural network, the LCA is conceptualized as an advanced systems-engineering orchestration architecture. It is designed to autonomously ingest, route, and synchronize multi-format patient data across specialized pathological pipelines without rigid coupling to the underlying computational models. A foundational pillar of the LCA is the principle of Algorithmic Impermeability. By establishing strict structural boundaries between data preprocessing, clinical routing, and AI inference, the framework ensures that its orchestration logic remains entirely independent of specific, transient machine-learning models.  

To formally establish these architectural boundaries, the main contributions of this work are defined as follows:
\begin{itemize}
    \item \textbf{The LCA Framework and Algorithmic Impermeability:} We formally define the LCA as a 7-tuple orchestration architecture that rigorously enforces Algorithmic Impermeability, mathematically guaranteeing that system-level orchestration projections remain invariant under AI model substitutions.
    \item \textbf{Input Data Formalization (Entry Theory):} We introduce a novel Entry Theory leveraging Geometric Deep Learning (GDL) to systematically standardize multimodal clinical inputs across distinct structural and medical axes, ensuring the mathematical independence of data topology from clinical provenance.
    \item \textbf{Dual-Variant Pathology Routing:} We define a modular Cancer Switching Module capable of dynamic pipeline activation via either explicit deterministic parameterization (V1) or an autonomous, calibrated multi-protocol learned router (V2).
    \item \textbf{The Standardized Intermediate Payload (SIP):} We establish a deliberate architectural boundary by culminating the system's output in a decoupled, proprietary JSON payload (SIP) accompanied by a structured Supplementary Data Request (SDR) feedback loop. This explicitly isolates the core AI framework from downstream Electronic Medical Record (EMR) interoperability dependencies.  
\end{itemize}

The remainder of this paper is organized as follows: Section \ref{sec:related} reviews the related literature regarding multimodal oncology and clinical decision support. Section \ref{sec:method} details the methodological foundations, including the formal Entry Theory, the LCA framework, its internal modules, and the SIP specification boundary. Section \ref{sec:poc} presents the empirical Proof of Concept (PoC) results, demonstrating the framework's nominal flow, algorithmic impermeability, and failure safety. Section \ref{sec:disc} outlines the principal findings, framework limitations, and ethical considerations as a CDSS. Finally, Section \ref{sec:conl} concludes the paper with a defined roadmap for future interoperability translation.  

\begin{figure*}[tbp]
    \centering
    \includegraphics[width=.9\textwidth]{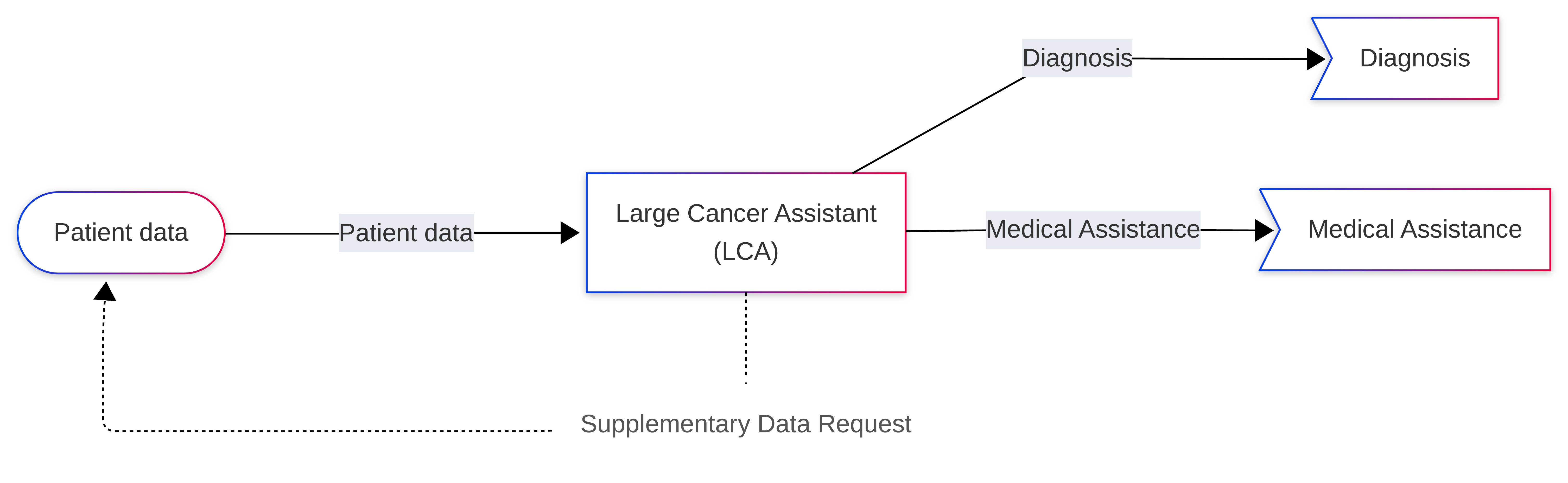}
    \caption{Macroscopic ecosystem of the LCA framework, showing the central black box "LCA" with an incoming "Multimodal patient data" stream and three distinct outputs: Diagnosis, Medical Assistance, and a Supplementary Data Request feedback loop.}
    \label{fig:macroscopic_ecosystem}
\end{figure*}

\section{Related Work}
\label{sec:related}

This section surveys prior work relevant to the Large Cancer Assistant (LCA) framework across two primary axes: multimodal deep learning in oncology and orchestration-style clinical decision support systems (CDSS). Together, these strands demonstrate that while powerful AI models exist in isolation, the combination of strict algorithmic impermeability and modular, multi-pathology switching remains underexplored.

\subsection{Multimodal Deep Learning in Oncology}
\label{subsec:related-multi_modal_dl}

A substantial literature has emerged focusing on the joint modeling of clinical modalities to improve diagnostic and prognostic accuracy. Early frameworks established the value of combining multi-omics inputs (mRNA, miRNA, DNA methylation) using deep learning and graph-based feature selection to outperform single-modality baselines \cite{GD_Net}. Broader multi-omics models leverage deep neural networks to fuse genomics and epigenetics with clinical covariates for survival analysis and risk stratification \cite{DeepOmix}.

Recent advancements have extended this integration to include spatial imaging and unstructured clinical variables. For example, systems have been developed to integrate pre-treatment histopathology images, molecular biomarkers, and clinical data using CNN-based image encoders alongside dense layers for tabular features, demonstrating the necessity of explicit cross-modal representation learning \cite{DeepClinMed_PGM}. Similarly, multimodal state-space models combine endoscopic images, radiomic features, and clinical variables within unified probabilistic frameworks to capture temporal disease trajectories \cite{DeepMultimodal_Colorectal}. At the scale of foundation models, large-scale multimodal transformers have been trained on millions of pathology images and free-text clinical reports to predict prognosis and immunotherapy response across multiple tumor types \cite{MUSK_Stanford}.

However, a persistent limitation across these architectures is their monolithic design \cite{Multimodal_Review_2024}. These systems typically operate as rigidly coupled prediction models, tightly bound to their training cohorts and specific data modality combinations \cite{Precision_Oncology_Survey}. Because their data orchestration is permanently fused to their underlying feature extractors, they prevent modular updates or adaptable clinical routing. Consequently, the field currently lacks a mechanism to structurally decouple multimodal data ingestion from deep learning inference---a critical void that the LCA resolves through its foundational principle of Algorithmic Impermeability.

\subsection{Multimodal Clinical Decision Support Systems}
\label{subsec:related-multi_modal_cad}

Parallel to model-centric advancements, several systems target end-to-end decision support, embedding AI directly into clinical workflows. Task-focused CDSS architectures frequently fuse radiological imaging, dosimetric parameters, and clinical covariates to support personalized radiotherapy \cite{Multimodal_Radiotherapy_NSCLC} or predict post-operative recurrence \cite{DeepMultimodal_Colorectal}. Beyond these task‑specific tools, several CDSS architectures have been implemented as workflow‑engine–driven systems that encode clinical guidelines via integrated workflow and rule engines, yet they remain tightly bound to local institutional pathways and lack a cancer‑agnostic orchestration layer \cite{Lee2010, Huser2011}.

While these systems demonstrate the clinical necessity of AI empowerment, their logic remains closely tied to specific institutional contexts or isolated anatomical targets (e.g., exclusively addressing non-small-cell lung cancer or colorectal tumors). Because they lack a universal, cancer-agnostic ingestion framework, translating these siloed CDSS tools to generalized hospital environments is fundamentally restricted. They require an overarching, model-agnostic orchestration layer---which the LCA introduces via its dynamic Cancer Switching Module--- to autonomously align highly diverse patient profiles with the appropriate specialized clinical protocols.

\subsection{Positioning of the Large Cancer Assistant (LCA)}
\label{subsec:related-position}

The proposed LCA framework deliberately departs from existing multimodal oncology paradigms along three strategic dimensions:

\begin{itemize}
\item \textbf{Algorithmic Impermeability vs. Monolithic Fusion:} Unlike existing multimodal transformers or deep-fusion networks that directly consume diverse inputs to generate a unified probability score \cite{GD_Net, MUSK_Stanford}, the LCA enforces strict modular boundaries. It separates multimodal data preprocessing from the core diagnostic inference, ensuring the orchestration logic remains fully independent of the underlying neural networks.

\item \textbf{Dynamic Switching vs. Task-Specific CDSS:} Rather than operating as a siloed tool for a single cancer type \cite{Multimodal_Radiotherapy_NSCLC}, the LCA utilizes a distinct \emph{Cancer Switching Module}. This allows the framework to act as a universal orchestrator, dynamically detecting the anatomical target and activating the appropriate, specialized downstream pathological pipeline.

\item \textbf{Standardized Output Boundary vs. Direct EMR Integration:} Unlike systems that directly commit results to hospital databases, the LCA produces a Standardized Intermediate Payload (SIP) as its sole external output. This boundary strictly decouples the AI orchestration core from volatile EMR infrastructure: changes to HL7 FHIR versions, database schemas, or institutional IT systems require no modification of any upstream LCA module. Downstream FHIR translation is scoped as an independent subsequent work. This design is consistent with interoperability‑driven CDSS architectures that leverage standards such as HL7 FHIR, while deliberately keeping the orchestration layer model‑agnostic and EMR‑decoupled \cite{carbonaro2025raw, Jung2022}.
\end{itemize}

\section{Overview \& Design Principle}
\label{sec:method}

A critical limitation of existing automated diagnostic tools is their inability to process the full spectrum of patient data, often isolating radiological imaging from the broader clinical context. To address this, the Large Cancer Assistant (LCA) orchestration framework is explicitly designed to encapsulate the entire decision assistance schema in a post-hoc (offline) manner. It ingests and aggregates a multimodal triad of patient data: high-dimensional spatial imaging (e.g., CT, MRI), unstructured textual data (e.g., clinical notes), tabular biological metrics (e.g., blood biomarkers), etc., after acquisition is complete. The LCA does not operate concurrently with an ongoing examination (e.g., it does not process a live ultrasound stream during the exam); no real-time processing is intended.

A foundational design principle of the LCA is Algorithmic Impermeability. The framework imposes strict modular boundaries between its clinical routing logic and the underlying neural networks. By decoupling data orchestration from feature inference, the LCA ensures that its internal logic remains invariant under machine-learning model swaps, provided the interface contracts are satisfied. This impermeability ensures that the system is highly scalable, capable of integrating future interoperability standards natively, and strictly independent of specific, transient machine-learning models. The internal orchestration is structured as a unidirectional pipeline governed by a directed acyclic graph (DAG), propagating a shared routing context denoted as $\hat{P}$ through the successive modules (See Figure \ref{fig:sec:method}).

\begin{figure*}[tbp]
    \centering
    \includegraphics[width=\textwidth]{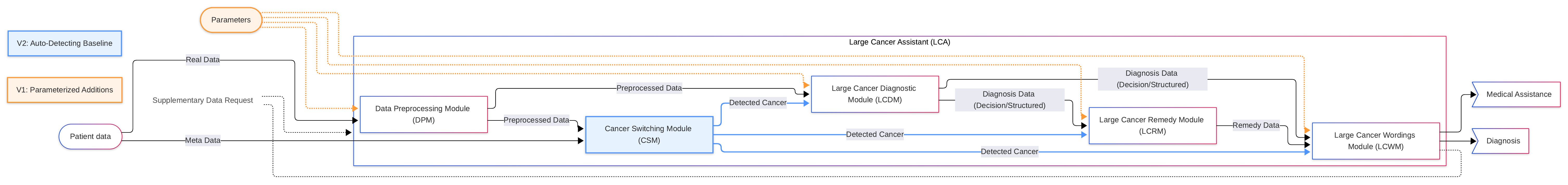}
    \caption{Internal orchestration graph (unidirectional DAG) showing the pipeline sequence: DPM $\rightarrow$ CSM $\rightarrow$ \{LCDM, LCRM\} $\rightarrow$ LCWM $\rightarrow$ SIP. The CSM includes V1 (parameterized) / V2 (auto-detection) annotations. An SDR branch originates from the LCWM. The shared routing context $\hat{P}$ is propagated throughout the modules.}
    \label{fig:sec:method}
\end{figure*}

\subsection{Input Data Formalization (Entry Theory)}
\label{subsec:method-entry}
To orchestrate heterogeneous clinical data comprehensively, the LCA framework employs a formal Entry Theory (depicted in Figure \ref{fig:subsubsec:method-entry-charac}) that decouples the structural properties of an input from its clinical semantics.

\subsubsection{CS Axis (Structural)}
\label{subsubsec:method-entry-cs}
Following the principles of Geometric Deep Learning, input data is formalized structurally. A Domain (Definition 1) is defined as a pair $(\Omega, \mathfrak{G})$, where $\Omega$ is a set of positions and $\mathfrak{G}$ is a symmetry group acting on $\Omega$. An Entry (Definition 2) is a geometric signal $x \in \mathcal{C}^{\Omega}$ defined on this domain, with $\mathcal{C}$ being the feature space. This definition elegantly unifies spatial modalities (e.g., CT volumes, Whole-Slide Images) and non-spatial ones (e.g., textual clinical notes, tabular biological metrics) under a common mathematical representation

\subsubsection{Medical Axis}
\label{subsubsec:method-entry-med}
Independently of its structure, each entry is augmented with a clinical context, defined by three categorical attributes:
\begin{itemize}
    \item \textbf{Provenance ($\rho$):} The acquisition process (e.g., imaging/CT, pathology/WSI).
    \item \textbf{Usage ($u$):} The functional role in the decision process (e.g., observation, diagnosis).
    \item \textbf{Epistemic Certainty ($\kappa$):} The evidential grounding of the information (e.g., confirmed, suspected, inferred).
\end{itemize}
Together, these form the Medical Signature $\sigma = (\rho, u, \kappa)$ (Definitions \ref{def:entry-prov}-\ref{def:entry-med_sig}), capturing the clinical essence of the data.

\subsubsection{Characterized Entry \& Independence}
\label{subsubsec:method-entry-charac}
A Characterized Entry $\tilde{x} = (x, \Omega, \mathfrak{G}, \mathcal{C}, \sigma)$ (Definition \ref{def:entry-char_entry}) merges these two axes. Crucially, the CS structural axis and the medical axis are mutually independent (Proposition \ref{prop:entry-ind_axe}). The geometric structure of an entry does not dictate its medical signature, and vice versa. For instance, two structurally identical CT volumes can have different certainty levels depending on their conclusiveness, while two entries with identical signatures (e.g., biological observations) might have entirely different tabular geometries.

Finally, an LCA Input (Definitions \ref{def:entry-clinic_hist}-\ref{def:entry-lca_input}) is formalized as a clinical history: a finite, ordered sequence of characterized entries $(\tilde{x}_j)_{j=1}^{T}$ associated with a patient identifier, fully supporting multi-modal heterogeneity and arbitrarily short histories ($T \ge 1$); the degenerate case $T = 1$ -- a single new-patient visit or single-modality input -- requires no separate treatment.(see \ref{app:entry} for detailed entry theory explanation).

\begin{figure*}[tbp]
    \centering
    \includegraphics[width=\textwidth]{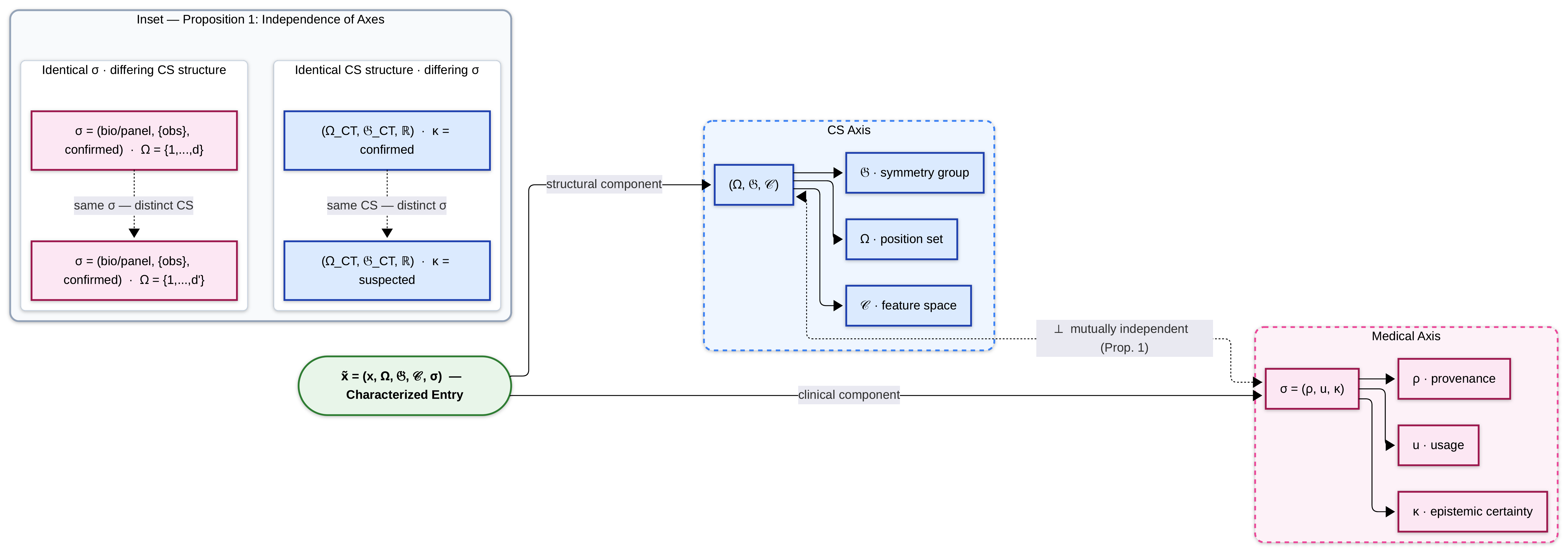}
    \caption{Structure of the characterized entry and orthogonality of axes, decomposing an entry into the CS axis ($\Omega, \mathfrak{G}, \mathcal{C}$) and the medical axis $\sigma=(\rho,u,\kappa)$. The inset illustrates Proposition \ref{prop:entry-ind_axe} (identical CS structure with differing signatures, and identical signatures with differing CS structures).}
    \label{fig:subsubsec:method-entry-charac}
\end{figure*}

\subsection{The LCA Framework}
\label{subsec:method-lca}
Formally, the Large Cancer Assistant (LCA) framework is structured as a 7-tuple. This architecture comprises the input space ($\mathcal{E}$), a shared protocol catalog $\mathcal{P} = \{p_1, \ldots, p_K\}$ (a finite, non-empty set of $k$ clinical protocols ; Definition \ref{def:csm-csm}), and a unidirectional orchestration pipeline consisting of five specific modules: the Data Preprocessing Module ($f_{DP}$), the Cancer Switching Module ($f_{CS}$), the Large Cancer Diagnostic Module ($f_{LCD}$), the Large Cancer Remedy Module ($f_{LCR}$), and the Large Cancer Wordings Module ($f_{LCW}$) ; where the global LCA function is depicted as : 
\begin{equation}
\text{LCA} \;=\; \bigl(\mathcal{E},\;\mathcal{P},\;f_{DP},\;f_{CS},\;f_{LCD},\; f_{LCR},\;f_{LCW}\bigr)
\end{equation}

\subsubsection{System-Level Algorithmic Impermeability}
\label{subsubsec:method-lca-impermeability}
The core guarantee of this mathematical formulation is defined by Proposition \ref{prop:lca-impermeab} (System Impermeability). It states that if two different machine-learning models satisfy the same diagnostic or therapeutic interface contract, swapping them will leave the orchestration-structural projection of the system entirely invariant. Specifically, the module activation sequence, the routing decisions, the failure handling mechanisms, and the final structural schema of the Standardized Intermediate Payload (SIP) remain identical. The clinical inference content will naturally differ between the two models, but the orchestration layer remains impermeable to this change. As a direct corollary (Corollary \ref{cor:lca-unchange}), downstream modules like the Data Preprocessing Module (DPM), Cancer Switching Module (CSM), and Large Cancer Wordings Module (LCWM) require no updates when underlying AI models are retrained or replaced.

\subsubsection{Key Framework Properties}
\label{subsubsec:method-lca-key_pptes}

This formalization guarantees four essential properties across the entire framework:
\begin{enumerate}[\bf (Q1)]
    \item \textbf{Defined on $\mathcal{E}$:} The framework processes all valid inputs, including partial clinical histories, without silent failures. More details in \ref{app:subsec:lca-q1_detailed}.
    \item \textbf{Protocol generality.} The framework is not specialized to any cancer type $c^*$. The protocol catalog $\mathcal{P}$ is a parameter; adding or removing a protocol $p_k$ requires updating the catalog and the corresponding module instances, not the orchestration.
    \item \textbf{Unidirectionality.} The pipeline graph is a DAG with a single source ($e$) and a single sink (SIP). There are no feedback loops and no bidirectional dependencies between modules.
    \item \textbf{Failure safety.} Every module that fails to process its input emits $\bot$ rather than a spurious output. The LCWM translates each $\bot$ into a Supplementary Data Request. No failure is silent.
\end{enumerate}


\subsubsection{Data Preprocessing Module (DPM)}
\label{subsubsec:method-ordchs-dpm}
The Data Preprocessing Module ($f_{DP}$) maps raw heterogeneous clinical inputs to canonical, standardized representations. Structural preprocessing is parameterized exclusively by the provenance $\rho$ of each entry, which selects the appropriate canonical form; the clinical attributes $u$ and $\kappa$ are not accessed and pass through unmodified. The full medical signature $\sigma = (\rho, u, \kappa)$ is never modified by the DPM. Formally, $f_{DP}$ is the sequential composition of five stages $f_0,\ldots,f_4$: history sequencing, signature extraction and preprocessing routing, structural preprocessing via provenance-specific operators $\phi_\rho$, axis reconstruction, and history recomposition. The associated properties---symmetry group preservation (P1.1), medical transparency (P1.2), metadata invariance (P1.3), and identity as degenerate case (P1.4)---and the complete formal definitions are detailed in \ref{app:dpm}.

\subsubsection{Cancer Switching Module (CSM)}
\label{subsubsec:method-ordchs-csm}
The Cancer Switching Module ($f_{CS}$) is the central routing engine of the LCA framework. Rather than enforcing a single definitive cancer diagnosis, the CSM receives the preprocessed clinical history from the DPM and outputs an activation set denoted $\hat{P}$. This set identifies the specific cancer protocols (from the a priori catalog $\mathcal{P}$ ; Definition \ref{def:csm-csm}) that warrant downstream investigation.

The framework supports two distinct routing variants:
\begin{itemize}
    \item \textbf{Variant V1 (Deterministic Activation):} Activation is governed by an explicit, a priori declaration parameter ($P_\lambda$). V1 bypasses the content of the clinical history entirely, routing the data strictly according to prior clinical or systemic instruction.
    \item \textbf{Variant V2 (Probabilistic Auto-Detection):} A learn\-ed, multi-protocol routing function that evaluates both the CS structural axis and the medical axis to infer clinical relevance. V2 activates protocols independently based on a calibrated confidence score exceeding a declared threshold $\tau \in (0,1)$. (Note: While fully specified theoretically within the framework, V2 is excluded from the current Proof of Concept).
\end{itemize}

Crucially, if evidence is insufficient in V2, or if the parameter is missing in V1, the CSM emits a null token ($\bot$), safely terminating the pipeline before inference and triggering a top-level Supplementary Data Request. (See \ref{app:csm} for details)

\subsubsection{Abstract AI Module}
\label{subsubsec:method-ordchs-ai_abstract}
The Large Cancer Diagnostic Module (LCDM) and the Large Cancer Remedy Module (LCRM) share an identical foundational architecture defined by the Abstract AI Module. This abstraction enforces a strict interface contract ($\mathcal{I}_\tau^{(k)}$) that mandates predefined input types, failure precondition sets, and explicit output space typologies. Output types are categorized globally as either structural (geometric signals defined on an output domain) or decisional (probability distributions over a class catalog). To guarantee algorithmic impermeability at the module level, a crucial distinction is made regarding failure handling: if an input lacks required modalities or fails predefined quality checks, the module emits a null token ($\bot$) to halt the pipeline securely. Conversely, if the underlying machine learning model fails internally during inference on valid data, the module must output a declared $\mathtt{CODE\_FAIL}$ rather than $\bot$, preserving the integrity of the framework's failure-safety mechanics.

\subsubsection{Large Cancer Diagnostic and Remedy Modules (LCDM \& LCRM)}
\label{subsubsec:method-ordchs-lcdm_lcrm}
The LCDM and LCRM instantiate the abstract interface to perform active cancer diagnostics and treatment generation, respectively. Executing strictly downstream of the CSM, the LCDM consumes the preprocessed clinical history and the activation set, producing novel characterized diagnostic entries. During this operation, the medical axis of the historical data is not discarded; instead, its epistemic certainty is renewed and synthesized into the output's medical signature.

Operating subsequently, the LCRM receives the complete diagnostic outputs alongside the untouched historical data. Both modules ensure per-protocol independence, operating on parallel isolated instances for each activated protocol. In specific clinical configurations where a protocol requires only a remedy phase (e.g., when the diagnosis is already definitively confirmed in the patient's history), the framework executes a canonical lifting operation. This mechanism bypasses the LCDM and maps the historically confirmed diagnosis directly into the requisite input format for the LCRM, strictly preserving the unidirectional flow of the pipeline (See \ref{app:lcdm_lcrm} for details).

\subsubsection{Large Cancer Wordings Module (LCWM)}
\label{subsubsec:method-ordchs-lcwm}
The Large Cancer Wordings Module ($f_{LCW}$) functions as the terminal node of the unidirectional pipeline and constitutes the exclusive external interface of the LCA framework. Like the DPM and the CSM, $f_{LCW}$ is $\theta$-free (Corollary~\ref{cor:lca-unchange}): it is not subject to the interface-contract substitution of Definition~\ref{def:lcdm_lcrm-mod_inter}, which applies only to $\tau \in \{\text{LCD}, \text{LCR}\}$. This fixed module aggregates the activation set, the structural and decisional outputs from the LCDM and LCRM, and the bypassed preprocessed clinical history.

From these inputs, the LCWM produces a unified Standardized Intermediate Payload (SIP). The SIP acts as a strict architectural boundary between the internal orchestration logic and external healthcare IT infrastructure. It is composed of two distinct elements:
\begin{itemize}
    \item \textbf{Text output:} A natural language narrative generated by an internal Natural Language Generation (NLG) model.
    \item \textbf{Structured bypass:} The unmodified structured AI outputs and historical data, passed through identically for downstream clinical archiving and auditing.
\end{itemize}

A critical feature of the LCWM is its enforcement of failure-safety via the Supplementary Data Request (SDR) track. If any upstream module emits a null token ($\bot$), the LCWM translates this precise failure into a targeted SDR. These requests are either top-level (when the CSM fails to activate any protocol) or protocol-level (when a specific LCDM or LCRM instance encounters a precondition failure). This mechanism guarantees that missing clinical evidence systematically halts the pipeline for the affected protocol and triggers human-in-the-loop intervention, precluding silent failures or uninformative outputs (See \ref{app:lcwm_sdr_sip} for details).

\subsection{Implementation Boundary: SIP/SDR Specification}
\label{subsec:method-sip_impl}
As the exclusive architectural boundary of the LCA framework, the Standardized Intermediate Payload (SIP) must strictly adhere to specific design principles to ensure interoperability and algorithmic impermeability (see Figure \ref{fig:subsec:method-sip_impl}). Several key principles explicitly govern its implementation:

\begin{enumerate}
    \item[\bf (D1)] \textbf{(SIP as the sole output type):} The LCWM invariably produces a SIP. Regardless of success or failure, the top-level payload structure remains a SIP. If a module failure occurs ($\bot$), the resulting Supplementary Data Request (SDR) is not a standalone payload type but is structurally embedded within the SIP's narrative and protocol outputs.
    \item[\bf (D2)] \textbf{($D_{out}$ by reference):} To prevent unacceptable latency and redundancy, the structured bypass does not embed heavy input content (e.g., full CT volumes or text tokens). Instead, the SIP logs input provenance via entry\_refs, mapping each entry to its acquisition index, its medical signature ($\sigma$), and its external identifier in the upstream healthcare system (e.g., FHIR, PACS).
    \item[\bf (D8)] \textbf{(Impermeability to serialization):} Machine-learning model identities (e.g., specific neural network versions like "monai\_unet\_v1\_lung") are strictly excluded from the SIP. Exposing such identities would create downstream dependencies, violating the algorithmic impermeability guarantee (Proposition 2). These details are reserved exclusively for internal audit logs.
    \item[\bf (D12)] \textbf{($\bot$ vs CODE\_FAIL distinction):} The SIP structure explicitly distinguishes between orchestration halting and inferential failure. The $\bot$ token, which halts the pipeline and triggers an SDR, is reserved solely for missing or invalid inputs (precondition failures, $\mathcal{D}_\bot^{(k)}$). Conversely, if an AI model fails internally during inference on a valid input, it outputs a protocol-specific CODE\_FAIL class rather than $\bot$, ensuring downstream consumers can precisely audit the failure's true origin without halting the orchestration logic erroneously.
\end{enumerate}

The complete set of design principles D1–D12, along with the full annotated JSON schema, are detailed in \ref{app:sip_sdr_design}. The comprehensive coverage of the eight canonical orchestration cases, demonstrating the SIP behavior under various module states, is provided in \ref{app:sip_sdr_cases}.

\begin{figure*}[tbp]
    \centering
    \includegraphics[width=\textwidth]{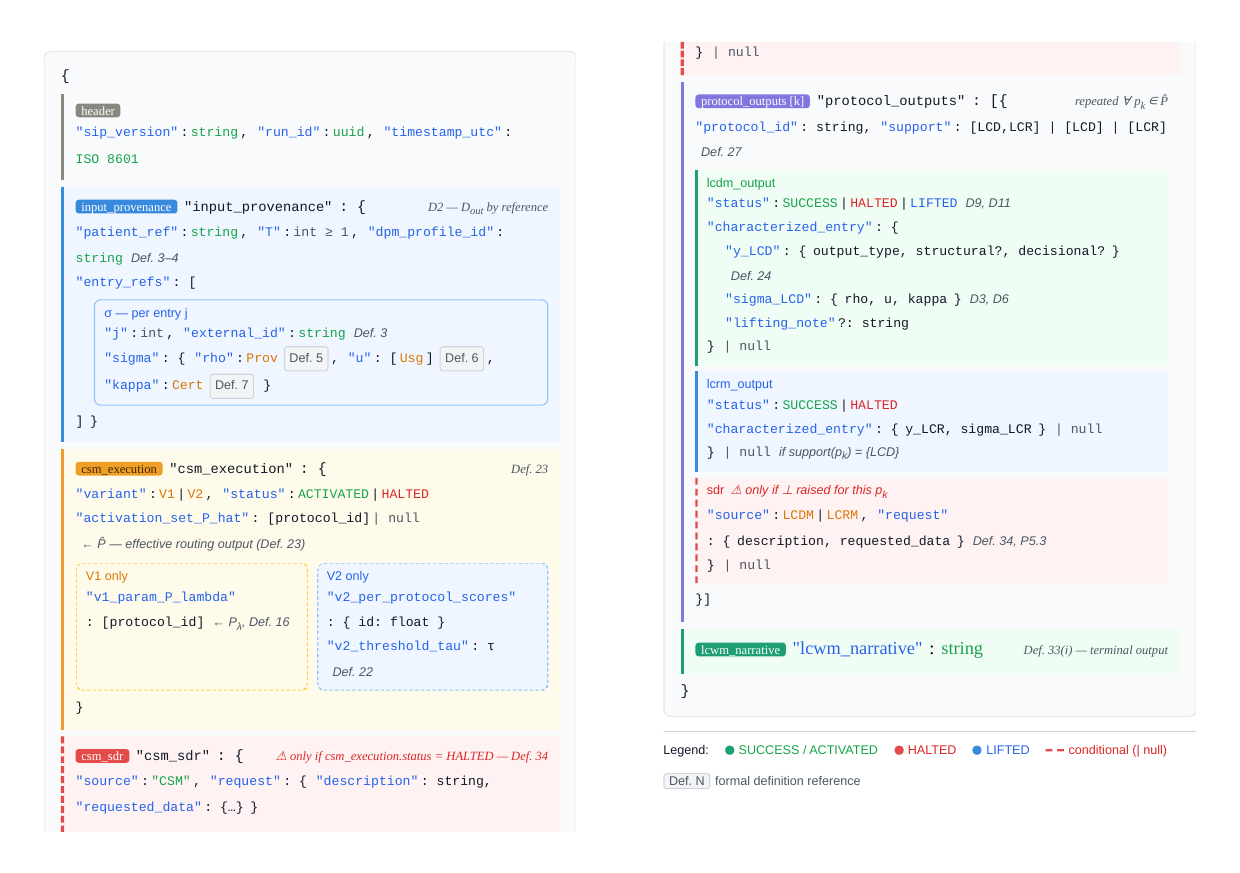}
    \caption{Condensed SIP/SDR schema (annotated pseudo-JSON representation) showing the block layout: header; input\_provenance (entry\_refs + $\sigma$); csm\_execution (variant, $\hat{P}$, V1/V2 settings); csm\_sdr; protocol\_outputs[$k$] containing individual lcdm\_output, lcrm\_output, or protocol-level sdr; and finally the lcwm\_narrative block.}
    \label{fig:subsec:method-sip_impl}
\end{figure*}

\begin{table*}[width=.9\textwidth, cols=6, pos=tbp]
    \caption{Coverage of Orchestration Situations (T1 Source)}
    \label{tab:subsec:method-sip_impl}
    \begin{tabular*}{\tblwidth}{@{} LLLLLL @{} }
        \toprule
        Case & CSM Variant & Support & LCDM & LCRM & SDR \\
        \midrule
        1 & V1 & \{LCD, LCR\} & SUCCESS & SUCCESS & --- \\
        2 & V2 multi-proto & \{LCD, LCR\} + \{LCD\} & SUCCESS & SUCCESS / null & --- \\
        3 & V2 & --- & not invoked & not invoked & CSM top-level \\
        4 & V1 & \{LCD, LCR\} & HALTED & null (not invoked) & LCDM level \\
        5 & V1 & \{LCD, LCR\} & SUCCESS & HALTED & LCRM level \\
        6 & V1 & \{LCR\} only & LIFTED & SUCCESS & --- \\
        7 & V1 multi-proto & \{LCD, LCR\} + \{LCD, LCR\} & SUCCESS / HALTED & SUCCESS / null & LCDM level (partial) \\
        8 & V1 & \{LCD, LCR\} & SUCCESS (CODE\_FAIL) & HALTED & LCRM level \\
        \bottomrule
    \end{tabular*}
\end{table*}

\section{Results \& Proof of Concept Setup}
\label{sec:poc}
The proof of concept (PoC) is governed by a single axiom: it evaluates the \emph{orchestration} behaviour of the framework--- routing, failure handling, and output structuring ---not the clinical performance of the underlying models. Accordingly, classifier-level metrics such as AUC or the Dice coefficient are deliberately excluded. All scenarios target the deterministic variant V1 on a single lung protocol; the diagnostic module is instantiated with interchangeable open reference bundles and the remedy module with a declared rule-based stub, as summarised in Fig.~\ref{fig:sec:poc}. Full environment, metric definitions, and sample sizes are given in~\ref{app:poc}.

\begin{figure*}[tbp]
    \centering
    \includegraphics[width=\textwidth]{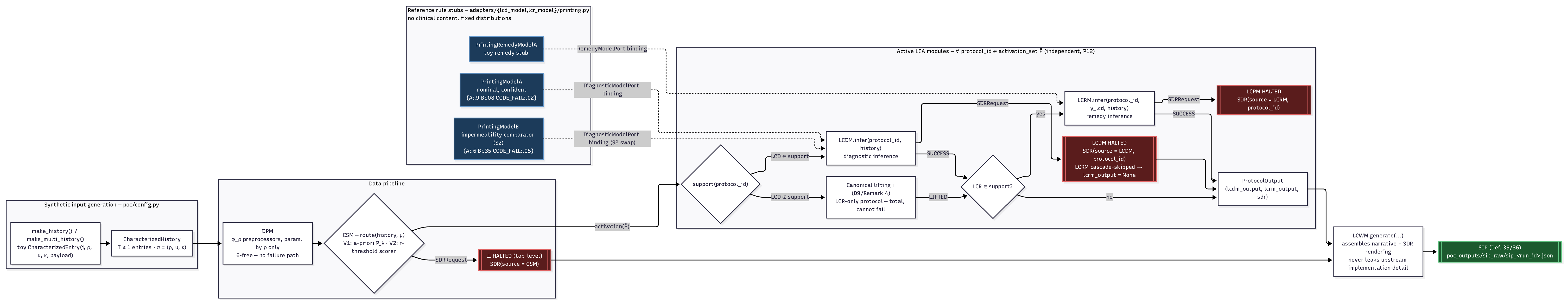}
    \caption{PoC architecture diagram showing the data pipeline, active LCA modules, synthetic input generation, and reference rule stubs.}
    \label{fig:sec:poc}
\end{figure*}

\subsection{Scenario 1: Nominal Flow}
\label{subsec:poc-s1}
Over $N=10$ runs the pipeline completed end-to-end without emitting $\bot$ (completion rate $100\%$). Every payload was schema-valid and carried the expected diagnostic signature $\sigma_{LCD}=(\textsf{inference/AI},\{\textsf{diagnosis}\}, \textsf{inferred})$ (Table~\ref{tab:poc-s1}). We do not claim the orchestration cost dominates inference: the reference components used here are lightweight. Rather, the orchestration layer itself (DPM, CSM, LCWM; $\approx 0.04$~ms) adds a sub-millisecond cost that is negligible relative to the $10^{2}$--$10^{3}$~ms typical of production 3D inference, confirming that orchestration introduces no material overhead in deployment.

\subsection{Scenario 2: Algorithmic Impermeability}
\label{subsec:poc-s2}
We substituted the diagnostic module's output stub (Stub A, emitting a 3D U-Net-style output schema) with an architecturally distinct stub (Stub B, emitting a SegResNet-style output schema), both satisfying the same interface contract $\mathcal{I}_{LCD}^{(\mathrm{lung})}$, on $N=10$ identical inputs. The orchestration-structural projection was invariant in every pair ($\pi$-equality $100\%$) while the diagnostic content differed in every pair ($100\%$; Table~\ref{tab:poc}). This is a direct empirical instantiation of Proposition~2: the routing projection is impermeable to the model swap even though the inference content is not.

\subsection{Scenario 3: Failure Safety}
\label{subsec:poc-s3}
We separate two structurally different guarantees (Table~\ref{tab:poc}).
\emph{Module-level failure safety} concerns inputs that reach a module and violate its declared precondition set $\mathcal{D}_{\bot}^{(k)}$. For both a corrupted CT and a missing required modality ($N=10$ each), the diagnostic module raised $\bot$ and the LCWM emitted a protocol-level SDR with the correct source ($\textsf{LCDM}$) and a non-generic request, yielding $100\%$ SDR recall---where recall is the conjunction of (i) $\bot$ raised, (ii) correct source, (iii) SDR emitted, and (iv) non-generic request. \emph{Type safety} is a distinct property: an empty history ($T=0$) lies outside the input space $\mathcal{E}$ (Def.~\ref{def:entry-lca_input}, $T\geq 1$) and is rejected at construction, before any module executes.
This produces no SDR---it is a violation of the type invariant, not a module failure---and was correctly rejected in $100\%$ of cases ($N=10$).

\subsection{Scenario 4: Multi-Protocol Execution ($K=2$)}
\label{subsec:poc-s4}
With $P_{\lambda}=\{p_{\mathrm{lung}},p_{\mathrm{mock}}\}$, both diagnostic instances executed as independent per-protocol branches in every run ($N=10$): branch independence, composite payload schema validity, and a protocol-output cardinality of exactly two each held at $100\%$ (Table~\ref{tab:poc}). This confirms that the framework composes multiple protocols without cross-contamination, addressing concerns specific to single-protocol ($K=1$) evaluation.

\begin{table}[tbp]
    \caption{S1 (nominal flow, $N=10$): per-module latency and payload validation.}
    \label{tab:poc-s1}
    \begin{tabular*}{\tblwidth}{@{} LC @{} }
        \toprule
        Module & Latency (mean $\pm$ std, ms) \\
        \midrule
        $t_{DPM}$            & $0.025 \pm 0.004$ \\
        $t_{CSM}$            & $0.009 \pm 0.004$ \\
        $t_{LCDM}$           & $0.032 \pm 0.007$ \\
        $t_{LCRM}$           & $0.022 \pm 0.003$ \\
        $t_{LCWM}$           & $0.008 \pm 0.004$ \\
        $t_{\mathrm{total}}$ & $0.096 \pm 0.016$ \\
        \midrule
        Completion rate         & $100\%$ \\
        SIP schema valid        & $100\%$ \\
        $\sigma_{LCD}$ correct  & $100\%$ \\
        \bottomrule
    \end{tabular*}
\end{table}

\begin{table}[tbp]
    \caption{Orchestration-property outcomes for S2--S4 ($N=10$ each; all targets $100\%$). The empty-history row is a type-system invariant (Def.~4), not a module-level failure, and emits no SDR.}
    \label{tab:poc}
    \begin{tabular*}{\tblwidth}{@{} LLC @{} }
        \toprule
        Sc. & Property & Rate \\
        \midrule
        S2 & $\pi$-equality (projection invariant)             & $100\%$ \\
        S2 & Content difference ($\tilde{y}_{LCD}$)            & $100\%$ \\
        \midrule
        S3 & SDR recall --- corrupted data$^{a}$               & $100\%$ \\
        S3 & SDR recall --- missing modality$^{a}$             & $100\%$ \\
        S3 & Type-invariant rejection --- empty history$^{b}$  & $100\%$ \\
        \midrule
        S4 & Branch independence                               & $100\%$ \\
        S4 & Composite SIP schema valid                        & $100\%$ \\
        S4 & Protocol-output cardinality $=2$                  & $100\%$ \\
        \bottomrule
    \end{tabular*}
    {\footnotesize 
    $^{a}$ Recall $=(\bot\ \text{raised})\wedge(\text{correct source})\wedge (\text{SDR emitted})\wedge(\neg\,\text{generic})$; source $=\textsf{LCDM}$ (protocol-level).\\
    $^{b}$ Rejected at construction ($T<1$); no SDR is produced.}
\end{table}

\section{Discussion}
\label{sec:disc}
The PoC validates the framework at the level it is designed to operate: orchestration, not classification. Across the four scenarios the system exhibited 
(i)~end-to-end completion with structurally valid payloads; 
(ii)~empirical algorithmic impermeability—the routing projection was invariant under a labeled stub substitution with differing simulated content, an experimental instantiation of Proposition~\ref{prop:lca-impermeab};
(iii)~failure safety with $100\%$ SDR recall and correctly attributed, non-generic requests, complemented by construction-time enforcement of the input type invariant; and 
(iv)~per-protocol independence under multi-protocol activation. These are properties of the orchestration layer: they hold irrespective of the diagnostic model and therefore generalise beyond the particular components used here.

\subsection{Positioning}
\label{subsec:disc-pos}
The impermeability result is the practical payoff of the design. Because the orchestration projection is invariant under model swaps, clinical models can be retrained or replaced without revalidating routing, failure handling, or the output contract---an operation that is structurally impossible in monolithic fusion architectures. The SIP further isolates the framework from downstream IT volatility, so that changes to EMR schemas or interoperability standards do not propagate into any upstream module. Combined with cancer-agnostic switching, this positions the LCA as an orchestration layer rather than a competitor to any individual diagnostic model.

\subsection{Limitations}
\label{subsec:disc-limit}
The evaluation is deliberately scoped. It exercises the deterministic variant V1 only; the probabilistic router V2, although fully specified, is not empirically validated here, and its routing guarantee is conditional on calibration, in line with recent empirical analyses of calibration and selective prediction in multimodal clinical condition classification \cite{lopez2026empirical}. Inference is provided by declared reference stubs emitting simulated diagnostic content — not executed neural architectures — and the remedy module by a declared rule-based stub: this is appropriate for testing orchestration properties but entails no claim about clinical accuracy or about the behaviour of any specific model class, and the latency argument is made by reference to typical inference magnitudes rather than against a production model. Remedy synthesis policies and SDR request templates are protocol-declared and require maintenance as clinical guidelines evolve, and natural-language generation is presently English-centric. Finally, empirical generality is shown on a single real pathology augmented by a mock protocol; protocol generality at scale is argued formally (Q2) rather than measured across many pathologies.

\subsection{Ethical \& regulatory}
\label{subsec:disc-ethic_reg}
The LCA is a decision-support layer, not an autonomous diagnostic agent. Its failure-safety mechanism enforces a human-in-the-loop pathway: missing or invalid evidence halts the affected protocol and emits a targeted SDR rather than a silent or speculative output, the behaviour expected of a safe clinical decision support system. A deployed instantiation operating within a diagnostic pathway would fall under software-as-a-medical-device regimes (e.g., FDA SaMD, EU MDR); the architecture supports this through the SIP audit trail and the exclusion of model identities from the external payload, which together aid traceability and governance. Clinical responsibility for any resulting action remains with the supervising clinician. This human‑in‑the‑loop design is consistent with established CDSS design principles that emphasize mitigation of alert fatigue, algorithmic bias, and preservation of clinician judgment \cite{bayor2025designing}.

\section{Conclusion}
\label{sec:conl}
The integration of artificial intelligence into clinical oncology has historically been constrained by rigid, monolithic models that permanently couple data ingestion with specific neural network inferences. To overcome these systemic bottlenecks, this paper introduced the Large Cancer Assistant (LCA), a model-agnostic orchestration framework designed to asynchronously harmonize multimodal spatial, semantic, and biological patient data. By enforcing the formal principle of Algorithmic Impermeability, the LCA establishes a unidirectional pipeline that structurally decouples raw data routing from black-box deep learning inference, and mathematical predictions from clinical heuristics. Furthermore, the dual-variant routing paradigm---offering both explicit deterministic parameterization (V1) and the formal specification for autonomous probabilistic detection (V2)---ensures the framework remains highly adaptable to varying levels of hospital IT infrastructure fidelity.  

A defining feature of the LCA architecture is its deliberate boundary constraint: the orchestration systematically outputs a proprietary Standardized Intermediate Payload (SIP). This structural design successfully insulates the core AI systems from the volatile protocols of external hospital databases. Consequently, the framework demonstrates immediate clinical applicability for any multimodal oncological Clinical Decision Support System (CDSS) requiring a scalable, model-agnostic orchestration layer that natively provides a robust, interoperability-ready boundary.  

To position the LCA framework within a broader clinical translation trajectory, we define an explicit, three-tiered research roadmap. First, the downstream translation of the SIP into strict HL7 FHIR resources is deliberately scoped as an independent architectural challenge, constituting the exclusive subject of a subsequent dedicated paper. Second, parallel informatics efforts will address the factored extraction of deep learning features across distinct modalities. Finally, future work will focus on the rigorous empirical validation and calibration of the probabilistic routing variant (V2) across large-scale, multi-pathology clinical cohorts.

\appendix
\renewcommand{\thesection}{Appendix \Alph{section}}

\section{Entry Theory Complete Formalization}
\label{app:entry}

\begin{definition}[Domain]
\label{def:entry-domain}
A domain is a pair $(\Omega, \mathfrak{G})$ where $\Omega$ is a non-empty set of positions and $\mathfrak{G}$ is a group acting on $\Omega$ via a left action $\mathfrak{G} \times \Omega \to \Omega$.
\end{definition}

\begin{definition}[Entry]
\label{def:entry-entry}
An entry is a signal $x \in \mathcal{C}^{\Omega}$, i.e., a function $x : \Omega \to \mathcal{C}$, defined on a domain $(\Omega, \mathfrak{G})$ with feature space $\mathcal{C}$. The group $\mathfrak{G}$ acts on entries by $(g \cdot x)(\omega) = x(g^{-1}\omega)$ for all $g \in \mathfrak{G}$, $\omega \in \Omega$. (See details in Table~\ref{tab:app:entry})
\end{definition}

\begin{table*}[width=.9\textwidth, cols=4, pos=tbp]
    \caption{Modality Structural Characterization (T)}
    \label{tab:app:entry}
    \begin{tabular*}{\tblwidth}{@{} LLLL @{}}
        \toprule
        Modality & $\Omega$ & $\mathfrak{G}$ & $\mathcal{C}$ \\
        \midrule
        CT / MRI volume & $\{1,\ldots,D\} \times \{1,\ldots,H\} \times \{1,\ldots,W\}$ & Translations on $\mathbb{Z}^3$ & $\mathbb{R}$ \\
        Histopathology (WSI) & $\{1,\ldots,H\} \times \{1,\ldots,W\}$ & Translations on $\mathbb{Z}^2$ & $\mathbb{R}^3$ \\
        Clinical text & $\{1,\ldots,L\}$ & Translations on $\mathbb{Z}$ & $\Sigma$ \\
        Tabular biological metrics & $\{1,\ldots,d\}$ & $\{e\}$ (trivial) & $\mathbb{R}$ \\
        \bottomrule
    \end{tabular*}
\end{table*}

\subsection*{Canonical Examples (Entries)}
\label{app:subsec:entry-canonical}

\begin{itemize}
    \item \textbf{CT volume.} A computed tomography scan of $D \times H \times W$ voxels: $x : \{1,\ldots,D\} \times \{1,\ldots,H\} \times \{1,\ldots,W\} \to \mathbb{R}$ where $x(d,h,w) \in \mathbb{R}$ is the Hounsfield Unit value, with $\mathfrak{G} = $ translations on $\mathbb{Z}^3$.
    \item \textbf{Histopathology slide (WSI).} A whole-slide image: $x : \{1,\ldots,H\} \times \{1,\ldots,W\} \to \mathbb{R}^3$ where $x(h,w) \in \mathbb{R}^3$ is the RGB vector, with $\mathfrak{G} = $ translations on $\mathbb{Z}^2$.
    \item \textbf{Clinical note.} Tokenized text over vocabulary $\Sigma$: $x : \{1,\ldots,L\} \to \Sigma$, with $\mathfrak{G} = $ translations on $\mathbb{Z}$.
    \item \textbf{Blood panel.} A tabular record of $d$ analytes: $x : \{1,\ldots,d\} \to \mathbb{R}$, with $\mathfrak{G} = \{e\}$ (trivial symmetry group). Scalar values (e.g., PSA level) are represented identically prior to ingestion.
\end{itemize}

\begin{definition}[Clinical History]
\label{def:entry-clinic_hist}
A clinical history associated with patient identifier $s$ is an ordered pair $\mathcal{H} = \bigl((\tilde{x}_j)_{j=1}^{T},\; s\bigr)$ where $(\tilde{x}_j)_{j=1}^{T}$ is a finite ordered sequence of characterized entries, indexed by acquisition order.
\end{definition}

\begin{definition}[LCA Input]
\label{def:entry-lca_input}
An LCA input is a clinical history: $\mathcal{E} := \bigl\{((\tilde{x}_j)_{j=1}^T,\, s) : T \geq 1\bigr\}$.
\end{definition}

\begin{definition}[Provenance]
\label{def:entry-prov}
The provenance of an entry $x$ is a class $\rho \in \mathsf{Prov}$. In the oncology domain:
\begin{align*}
\mathsf{Prov} &= \{\\
& \text{imaging/CT}, \text{imaging/MRI}, \text{imaging/PET},\\
& \text{pathology/WSI}, \text{biology/panel}, \\
& \text{documentation/note},\\
& \text{inference/AI}\\
\}
\end{align*}
\end{definition}

\begin{definition}[Usage]
\label{def:entry-usage}
The usage of an entry $x$ is a non-empty subset $u \subseteq \mathsf{Usg}$. In the oncology domain:
\begin{equation*}
\mathsf{Usg} = \{\text{observation}, \text{diagnosis}, \text{procedure}, \text{medication}\}
\end{equation*}
\end{definition}

\begin{definition}[Epistemic Certainty]
\label{def:entry-epist_cert}
The epistemic certainty of an entry $x$ is a class $\kappa \in \mathsf{Cert}$, where
\begin{equation*}
\mathsf{Cert} = \{\text{confirmed}, \text{suspected}, \text{inferred}\}
\end{equation*}
See details in Table~\ref{tab:app:subsec:entry-canonical}.
\end{definition}

\begin{table}[tbp]
    \caption{Default Medical Signature Mappings : $\kappa$ values}
    \label{tab:app:subsec:entry-canonical}
    \begin{tabular*}{\tblwidth}{@{} L @{}}
        \toprule
        Pathology \& direct measurement prov. default to \textbf{confirmed}. \\
        Documentation-based prov. default to \textbf{suspected}. \\
        Computational inference prov. default to \textbf{inferred}. \\
        \midrule
        \textit{\small \textbf{Notes:}}\\
        \textit{\small $\bullet$ $\rho$ and $\kappa$ are orthogonal.}\\
        \textit{\small $\bullet$ A histopathology report typically carries $\kappa = \text{confirmed}$,}\\
        \textit{\small \hspace{1em} but may carry $\kappa = \text{suspected}$ if tissue is insufficient.} \\
        \bottomrule
    \end{tabular*}
\end{table}

\begin{definition}[Medical Signature]
\label{def:entry-med_sig}
The medical signature of an entry $x$ is the triple $\sigma(x) = (\rho,\; u,\; \kappa) \in \mathsf{Prov} \times (2^{\mathsf{Usg}} \setminus \{\varnothing\}) \times \mathsf{Cert}$.
\end{definition}

\begin{definition}[Characterized Entry]
\label{def:entry-char_entry}
A characterized entry is a tuple $\tilde{x} = \bigl(x,\; \Omega,\; \mathfrak{G},\; \mathcal{C},\; \sigma\bigr)$ where $(x, \Omega, \mathfrak{G}, \mathcal{C})$ constitutes the CS axis and $\sigma$ constitutes the medical axis.
\end{definition}

\begin{proposition}[Independence of Axes]
\label{prop:entry-ind_axe}
The CS axis \\ $(\Omega, \mathfrak{G}, \mathcal{C})$ and the medical axis $\sigma$ are mutually independent.
\end{proposition}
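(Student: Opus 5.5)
Independence has no intrinsic probabilistic meaning here, so I will first fix it as a set-theoretic (product) notion and then prove that notion. Let $\mathsf{CS}$ be the class of admissible structural triples $(\Omega,\mathfrak{G},\mathcal{C})$ per Definitions \ref{def:entry-domain}--\ref{def:entry-entry}. Let $\mathsf{Sig} = \mathsf{Prov} \times (2^{\mathsf{Usg}}\setminus\{\varnothing\}) \times \mathsf{Cert}$ as in Definition \ref{def:entry-med_sig}. I will read the proposition as saying that the set of admissible characterized entries (Definition \ref{def:entry-char_entry}), projected onto $(\text{CS axis},\sigma)$, is the full Cartesian product $\mathsf{CS}\times\mathsf{Sig}$. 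In other words, there is no constraint of the form $\sigma = F(\Omega,\mathfrak{G},\mathcal{C})$ and none of the form $(\Omega,\mathfrak{G},\mathcal{C}) = G(\sigma)$. This is exactly the content of the two illustrations in Section \ref{subsubsec:method-entry-charac} and the inset of Figure \ref{fig:subsubsec:method-entry-charac}.

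The first step is to check that Definition \ref{def:entry-char_entry} imposes no coupling between the two axes. The tuple $\tilde{x}$ is formed freely from an entry $x\in\mathcal{C}^{\Omega}$ on a domain $(\Omega,\mathfrak{G})$ together with an arbitrary element $\sigma\in\mathsf{Sig}$. Definitions \ref{def:entry-prov}--\ref{def:entry-med_sig} define $\sigma$ by enumerating categorical classes, and they never reference $\Omega$, $\mathfrak{G}$ or $\mathcal{C}$. Conversely, Definitions \ref{def:entry-domain}--\ref{def:entry-entry} never reference $\rho$, $u$ or $\kappa$. Hence every pair in $\mathsf{CS}\times\mathsf{Sig}$ is realized by some characterized entry; one may take $x$ to be any function in the non-empty set $\mathcal{C}^\Omega$, assuming $\mathcal{C}\neq\varnothing$. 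This gives the product structure, and with it independence in the set-theoretic sense.

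The second step supplies witnesses showing that neither axis is a function of the other, so the claim is not vacuous. For structure $\not\Rightarrow$ signature, take two CT volumes with identical $(\Omega,\mathfrak{G},\mathcal{C})$ from Table \ref{tab:app:entry}, one with $\kappa=\text{confirmed}$ and one with $\kappa=\text{suspected}$, as the note under Table \ref{tab:app:subsec:entry-canonical} allows. For signature $\not\Rightarrow$ structure, take two entries with $\sigma=(\text{biology/panel},\{\text{observation}\},\text{confirmed})$ but different index sets $\{1,\dots,d\}$ and $\{1,\dots,d'\}$ with $d\neq d'$. A further example is a CT slice and a WSI that share a signature but have different $\mathfrak{G}$ and $\mathcal{C}$.

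The main obstacle is an apparent coupling: the default mappings of Table \ref{tab:app:subsec:entry-canonical}, and the fact that the DPM selects $\phi_\rho$ by provenance, seem to tie $\rho$ to structure. I will address this by arguing that these are \emph{default conventions and downstream operators}, not constraints in the definition of a characterized entry. The defaults are overridable, as the table's own note states, and the DPM's use of $\rho$ is a processing choice rather than a restriction on the admissible set. If a reader insists on a probabilistic reading, I would show that any distribution of the product form $\mu_{\mathsf{CS}}\otimes\mu_{\mathsf{Sig}}$ is supported on admissible entries. That establishes independence as a consistent modeling assumption, though not as an empirical fact, and I would state this caveat explicitly.
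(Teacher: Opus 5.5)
Your Step~2 is exactly the paper's proof. The paper establishes the proposition only through the two witnesses you give: two CT volumes with identical $(\Omega,\mathfrak{G},\mathcal{C})$ but $\kappa=\text{confirmed}$ versus $\kappa=\text{suspected}$, and two entries with $\sigma=(\text{biology/panel},\{\text{observation}\},\text{confirmed})$ on domains of different sizes. The paper therefore reads ``independent'' as ``neither axis is a function of the other''.

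Your Step~1 proves a genuinely stronger notion, variation independence: every pair in $\mathsf{CS}\times\mathsf{Sig}$ is realized. This has the merit of giving the word a precise meaning. But it holds only because Definition~\ref{def:entry-char_entry} is a free tuple, and it does not survive the typing imposed elsewhere in the framework. Definition~\ref{def:dpm-modal_prep} types $\phi_\rho$ on a provenance-indexed source domain $(\Omega_\rho,\mathfrak{G}_\rho,\mathcal{C}_\rho)$. (P1.1) writes the group as $\mathfrak{G}_{\rho_i}$. (Q1) asserts that $f_{LCA}$, hence $f_{DP}$, is defined on all of $\mathcal{E}$. Together these effectively tie the $\mathfrak{G}$ and $\mathcal{C}$ of any processable entry to its provenance class. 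For example, an entry with $\rho=\text{imaging/CT}$ on a tabular domain with trivial group lies outside the domain of $\phi_{\mathrm{CT}}$.

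So your claim that the DPM's use of $\rho$ is ``a processing choice rather than a restriction on the admissible set'' is too quick. For the entries the framework actually processes, the full product fails. The paper's weaker claim survives, because its witnesses vary only $\kappa$ within a fixed structure and only $|\Omega|$ within a fixed provenance class. You should present the product property as a fact about Definition~\ref{def:entry-char_entry} in isolation, and let the proposition rest on the witnesses.

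Also drop the ``CT slice and WSI sharing a signature'' example. They cannot share $\sigma$, since $\sigma$ contains $\rho$ and $\text{imaging/CT}\neq\text{pathology/WSI}$. A 2D CT slice would in any case carry the same translation group on $\mathbb{Z}^2$ as a WSI, differing only in $\mathcal{C}$.
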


\begin{pf}
Independence in both directions is witnessed by construction:
\begin{enumerate}[1.]
    \item \textbf{CS does not determine medical.} Two entries sharing identical $(\Omega, \mathfrak{G}, \mathcal{C})$ (e.g., both 3D volumes on $\mathbb{Z}^3$ translations with $\mathcal{C} = \mathbb{R}$) may have distinct medical signatures: a conclusive CT carries $\kappa = \text{confirmed}$, while an inconclusive CT of identical structural form carries $\kappa = \text{suspected}$.
    \item \textbf{Medical does not determine CS.} Two entries sharing identical medical signature 
    \begin{equation*}
    \sigma = (\text{biology/panel}, \{\text{observation}\}, \text{confirmed})
    \end{equation*}
    may have distinct CS structures: a standard blood panel is a tabular entry on domain size $d$, while a specialized urine analysis panel has a domain of different size $|\Omega| \neq d$ in general. \hfill $\blacksquare$
\end{enumerate}
\end{pf}

\section{The LCA Framework \& Impermeability}
\label{app:lca}

\begin{definition}[Large Cancer Assistant Framework]
\label{def:lca-lca}
The Large Cancer Assistant framework is a 7-tuple:
$$\text{LCA} \;=\; \bigl(\mathcal{E},\;\mathcal{P},\;f_{DP},\;f_{CS},\;f_{LCD},\; f_{LCR},\;f_{LCW}\bigr)$$
where:
\begin{itemize}
    \item $\mathcal{E}$: the input space (Definition 4).
    \item $\mathcal{P} = \{p_1, \ldots, p_K\}$: the protocol catalog, shared across all modules.
    \item $f_{DP}$: the Data Preprocessing Module (DPM).
    \item $f_{CS}$: the Cancer Switching Module (CSM).
    \item $f_{LCD}$: the Large Cancer Diagnostic Module (LCDM).
    \item $f_{LCR}$: the Large Cancer Remedy Module (LCRM).
    \item $f_{LCW}$: the Large Cancer Wordings Module (LCWM).
\end{itemize}
\end{definition}

\begin{proposition}[System Impermeability]
\label{prop:lca-impermeab}
Let $\Theta^{(k)}$ be the space of parameter configurations for protocol $p_k$. For any two parameter configurations $\theta, \theta' \in \Theta^{(k)}$ such that $f^{(k)}_{LCD,\theta}$ and $f^{(k)}_{LCD,\theta'}$ both satisfy the interface contract $\mathcal{I}^{(k)}_{LCD}$ (Definition 25), and similarly $f^{(k)}_{LCR,\theta}$ and $f^{(k)}_{LCR,\theta'}$ satisfy $\mathcal{I}^{(k)}_{LCR}$:

let $\pi$ denote the projection of the SIP onto its orchestrat\-ion-structural component --- the activation set $\hat{P}$, the per-protocol routing decisions, the set of protocols emitting $\bot$, and the SIP schema --- discarding the inference content $\tilde{y}^{(k)}$ and the generated text. Then:
$$\pi\bigl(f_{LCA}[\theta](e)\bigr) = \pi\bigl(f_{LCA}[\theta'](e)\bigr) \qquad \forall\, e \in \mathcal{E}$$
i.e., module activation, routing, failure handling, and SIP structure are identical.
\end{proposition}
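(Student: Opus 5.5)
The plan is to proceed by structural induction along the DAG of (Q3), propagating the pipeline in topological order $f_{DP}\to f_{CS}\to\{f^{(k)}_{LCD},f^{(k)}_{LCR}\}_k\to f_{LCW}$. At each stage we show that every quantity the projection $\pi$ reads is a function of data that does not depend on $\theta$. Fix $e\in\mathcal{E}$ and write $f_{LCA}[\theta]$ and $f_{LCA}[\theta']$ for the two instantiations. They share $\mathcal{E}$, $\mathcal{P}$, $f_{DP}$, $f_{CS}$ and $f_{LCW}$, since only the LCD/LCR instances are parameterized by $\Theta^{(k)}$.

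\textbf{Step 1 (upstream invariance).} Because $f_{DP}$ and $f_{CS}$ are $\theta$-free and the pipeline is unidirectional (Q3), no LCD/LCR output can feed back into them. Hence the preprocessed history $f_{DP}(e)$ is identical in both instantiations, and so is the activation set $\hat{P}=f_{CS}(f_{DP}(e))$. This holds in both variants. In V1, $\hat P$ depends only on $P_\lambda$. In V2, it depends only on the preprocessed history and the threshold $\tau$. The same argument covers the top-level $\bot$ case: if $\hat{P}=\bot$, it is $\bot$ in both instantiations, no AI module is invoked, and the SIP reduces to the CSM-level SDR case. That case involves no $\theta$ at all, so the claim is immediate.

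\textbf{Step 2 (per-protocol routing and failure sets).} For each $p_k\in\hat P$ we use the interface contract $\mathcal{I}^{(k)}_{LCD}$ (and $\mathcal{I}^{(k)}_{LCR}$), which fixes the input type, the precondition-failure set $\mathcal{D}^{(k)}_\bot$, and the output-space typology. The contract decides emission of $\bot$ by membership of the input in $\mathcal{D}^{(k)}_\bot$. Since that set is declared by the contract and not by $\theta$, and the input was shown invariant in Step 1, both $f^{(k)}_{LCD,\theta}$ and $f^{(k)}_{LCD,\theta'}$ emit $\bot$ on exactly the same inputs. Internal model failures are mapped to $\mathtt{CODE\_FAIL}$, which is a class inside the declared output space and not $\bot$ (D12). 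This keeps the set of $\bot$-emitting protocols $\theta$-independent. Whether the LCDM is run or canonically lifted is also a protocol-declared routing decision, and it too depends only on the history.

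\textbf{Step 3 (the LCRM, the main obstacle).} Here the input to $f^{(k)}_{LCR}$ contains $\tilde y^{(k)}_{LCD}$, which does depend on $\theta$, so it is not enough to argue that the input is invariant. Instead, we strengthen the induction hypothesis. We claim only that the \emph{type-level projection} of every intermediate object is invariant, namely its $\bot$ versus non-$\bot$ status, its output type, and its medical signature. We then require, as part of the reading of $\mathcal{I}^{(k)}_{LCR}$, that membership in $\mathcal{D}^{(k)}_\bot$ is decidable from this type-level projection: missing modalities, a $\bot$ upstream, or a $\mathtt{CODE\_FAIL}$ class tag.

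Some care is needed with the $\mathtt{CODE\_FAIL}$ tag. Case 8 shows that $\mathtt{CODE\_FAIL}$ can trigger an LCRM halt. If one model emits $\mathtt{CODE\_FAIL}$ and the other does not, the $\bot$-set could differ. We will therefore either restrict the claim to pairs agreeing on the $\mathtt{CODE\_FAIL}$ event, or, as I would try first, place the $\mathtt{CODE\_FAIL}$ indicator inside the discarded inference content and treat the resulting LCRM halt as an inferential outcome rather than a precondition violation. This step is where the proof is most fragile, and where the contract definition (Definition 25) must carry the weight.

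\textbf{Step 4 (the SIP).} $f_{LCW}$ is $\theta$-free, and by (D1) it always emits a SIP whose schema depends only on $\hat P$, the per-protocol status pattern, and the SDR sources. By (D8), model identities never enter the SIP. Applying $\pi$ discards $\tilde y^{(k)}$ and the narrative. Combining this with Steps 1–3 gives $\pi(f_{LCA}[\theta](e))=\pi(f_{LCA}[\theta'](e))$.
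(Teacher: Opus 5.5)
Your Steps~1, 2 and~4 coincide with the paper's Steps~1, 2 and~4. These are the $\theta$-freeness of $f_{DP}$ and $f_{CS}$; the ``$\bot$ if and only if the input lies in $\mathcal{D}^{(k)}_\bot$'' clause of Definition~\ref{def:lcdm_lcrm-mod_inter}, applied to the invariant LCDM input $(\mathcal{D}_{out},\hat{P})$; and a schema fixed by $\hat{P}$, $\mathrm{support}(p_k)$ and $\mathrm{type}(p_k)$. Your diagnosis at Step~3 is correct and sharper than the paper, which disposes of the LCRM with ``identical argument''. That argument needs both LCRM instances to receive the same input, but $\mathcal{X}^{(k)}_{LCR}=(\tilde{\mathcal{Y}}^{(k)}_{LCD}\times\mathcal{X}_{V2})\times 2^{\mathcal{P}}$ contains the $\theta$-dependent value $\tilde{y}^{(k)}_{LCD}$.

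\textbf{The gap.} Your Step~3 does not actually repair this, so the proposal does not prove the proposition as stated.

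\emph{The decidability condition is an extra hypothesis.} Decidability of $\mathcal{D}^{(k)}_\bot$ from a type-level projection is not a reading of Definition~\ref{def:lcdm_lcrm-mod_inter}, which admits any declared subset of $\mathcal{X}^{(k)}_{LCR}$. Without such a hypothesis the statement fails, as the following construction shows.
\begin{enumerate}
\item Declare the LCR precondition set to contain every input whose LCDM component has argmax $\mathtt{CODE\_FAIL}^{(k)}$.
\item Let $f^{(k)}_{LCD,\theta}$ return an ordinary distribution on some input outside its own precondition set.
\item Let $f^{(k)}_{LCD,\theta'}$ fail internally on that same input and put full mass on $\mathtt{CODE\_FAIL}^{(k)}$, exactly as Remark~\ref{rmk:lcdm_lcrm-fail} prescribes.
\end{enumerate}
Both implementations conform to $\mathcal{I}^{(k)}_{LCD}$, yet only the second run puts $p_k$ in the $\bot$-set. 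Any value-level condition does the same, for example a probability threshold on $y^{(k)}_{LCD}$.

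\emph{Option~(ii) is not available.} An LCRM halt \emph{is} an emission of $\bot$: it has status HALTED and produces a protocol-level SDR with source LCRM, as in Case~8. Moreover, $\pi$ explicitly retains the set of $\bot$-emitting protocols. Relabelling the halt as ``inferential'' therefore changes $\pi$ or D12 rather than proving the claim.

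\emph{What is provable is a conditional version.} Assume each LCR precondition set depends on $\tilde{y}^{(k)}_{LCD}$ only through $\theta$-invariant data:
\begin{itemize}
\item its $\bot$-status;
\item its declared output type;
\item its signature $\sigma^{(k)}_{LCD}$, which is $\theta$-free because $\kappa^{(k)}_{LCD}=\mathrm{synth}_k((\kappa_i)_i)$ is a protocol-declared policy on the history's certainties. This covers the $\kappa$-driven halt of Case~5.
\end{itemize}
Under that assumption your strengthened induction goes through. Your option~(i), combined with your decidability assumption, gives a similar weaker theorem. Either way, the condition must be stated as a hypothesis of the proposition, since it does not follow from the contract. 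The paper's proof does not supply it either.
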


The inference content $\tilde{y}^{(k)}$ and the generated text may differ between implementations; the orchestration-structural projection does not. This follows : 
\begin{equation}
\pi(f_{LCA}\theta) = \pi(f_{LCA}\theta') \quad \forall e \in \mathcal{E}
\end{equation}

\begin{pf}
\label{proof:lca-impermeab}
Let $e \in \mathcal{E}$ be arbitrary, and fix any pair $(\theta, \theta')$ of parameter configurations satisfying the stated interface contracts. We trace the pipeline under both configurations and show that every component retained by $\pi$ is identical.

\smallskip
\noindent\textit{Step~1 --- DPM and CSM are $\theta$-free.}
$f_{DP}$ and $f_{CS}$ carry no dependence on $\theta$ or $\theta'$. Therefore $\mathcal{D}_{out} = f_{DP}(e)$ and $\hat{P} = f_{CS}(\mathcal{D}_{out},\mu)$ are identical under both configurations. The activation set, the per-protocol routing decisions, and the induced partition $(\hat{P}_{LCD}, \hat{P}_{LCR})$ are the same.

\smallskip
\noindent\textit{Step~2 --- $\bot$-status of LCDM instances.}
Fix any $p_k \in \hat{P}_{LCD}$. By assumption, $f^{(k)}_{LCD,\theta}$ and $f^{(k)}_{LCD,\theta'}$ both satisfy $\mathcal{I}^{(k)}_{LCD}$ (Definition~\ref{def:lcdm_lcrm-mod_inter}). The failure precondition set $\mathcal{D}^{(k)}_{\bot}$ is a declared component of the contract $\mathcal{I}^{(k)}_{LCD}$, shared by every conforming implementation. Definition~\ref{def:lcdm_lcrm-mod_inter} requires that a conforming module emits $\bot$ \emph{if and only if} its input belongs to $\mathcal{D}^{(k)}_{\bot}$. 
Since both implementations receive the identical input $(\mathcal{D}_{out},\hat{P})$ established in Step~1:
\begin{itemize}
  \item $(\mathcal{D}_{out},\hat{P})\in\mathcal{D}^{(k)}_{\bot}$:
        both emit $\bot$;
  \item $(\mathcal{D}_{out},\hat{P})\notin\mathcal{D}^{(k)}_{\bot}$:
        both emit outputs in $\mathcal{Y}^{(k)}_{LCD}$
        (inference content may differ).
\end{itemize}
The $\bot$-status of every LCDM instance is therefore identical under $\theta$ and $\theta'$.

\smallskip
\noindent\textit{Step~3 --- $\bot$-status of LCRM instances.}
Identical argument for $f^{(k)}_{LCR,\theta}$ and $f^{(k)}_{LCR,\theta'}$ with respect to $\mathcal{I}^{(k)}_{LCR}$.

\smallskip
\noindent\textit{Step~4 --- SIP schema.}
The SIP schema is determined by $\hat{P}$, $\mathrm{support}(p_k)$, and $\mathrm{type}(p_k)$ --- all declared a priori as protocol parameters, independent of $\theta$. It is therefore identical under both configurations.

\smallskip
\noindent\textit{Conclusion.}
The projection $\pi$ retains: (i)~the activation set $\hat{P}$, (ii)~per-protocol routing decisions, (iii)~the set of protocols emitting $\bot$, and (iv)~the SIP schema; it discards inference content $\tilde{y}^{(k)}$ and generated text.
By Steps~1--4, each retained component is identical under $\theta$ and $\theta'$. Hence $\pi(f_{LCA}[\theta](e))=\pi(f_{LCA}[\theta'](e))$.
\end{pf}

\begin{corollary}
\label{cor:lca-unchange}
The modules $f_{DP}$, $f_{CS}$, and $f_{LCW}$ are unchanged by any update to $f_{LCD}^{(k)}$ or $f_{LCR}^{(k)}$. Clinical AI models can be retrained, updated, or replaced independently of the orchestration layer.
\end{corollary}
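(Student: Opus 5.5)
The plan is to read Corollary~\ref{cor:lca-unchange} as a statement about the \emph{functional signatures} of the three orchestration modules, and to derive it directly from the structure of Definition~\ref{def:lca-lca} together with the proof of Proposition~\ref{prop:lca-impermeab}. Concretely, an ``update to $f^{(k)}_{LCD}$ or $f^{(k)}_{LCR}$'' is formalised as a change of parameter configuration $\theta \mapsto \theta'$ within $\Theta^{(k)}$, subject to the standing hypothesis that both $f^{(k)}_{LCD,\theta'}$ and $f^{(k)}_{LCR,\theta'}$ still satisfy the interface contracts $\mathcal{I}^{(k)}_{LCD}$ and $\mathcal{I}^{(k)}_{LCR}$. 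The claim to establish is that $f_{DP}$, $f_{CS}$ and $f_{LCW}$ are the same maps under $\theta$ and $\theta'$, and that their definitions need no modification to consume the new outputs.

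The first step is immediate from Step~1 of the proof of Proposition~\ref{prop:lca-impermeab}. $f_{DP}$ and $f_{CS}$ are defined on $\mathcal{E}$ and on $\mathcal{D}_{out}$ respectively, with no argument or parameter drawn from $\Theta^{(k)}$. Being $\theta$-free, they are literally unchanged. They are also upstream in the DAG of property (Q3), so no output of the AI modules can reach them, and unidirectionality rules out any feedback dependence.

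The second step, which I expect to be the main obstacle, concerns $f_{LCW}$. This module is downstream and does receive AI outputs, so being $\theta$-free is not enough by itself. I must argue that its \emph{domain} and its \emph{case logic} depend only on contract-level data. I would proceed as follows:
\begin{enumerate}[(i)]
\item The inputs to $f_{LCW}$ are $\hat{P}$, the bypassed $\mathcal{D}_{out}$, and the per-protocol outputs. Each of these outputs lies in $\mathcal{Y}^{(k)}_{LCD}\cup\{\bot\}$ or $\mathcal{Y}^{(k)}_{LCR}\cup\{\bot\}$, and these output spaces are declared components of the contracts, so they are common to $\theta$ and $\theta'$.
\item The branching of $f_{LCW}$ (SIP versus SDR, top-level versus protocol-level, and the treatment of \texttt{CODE\_FAIL}) depends only on the $\bot$-status and the declared class catalog. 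Both are contract data, and the $\bot$-status is identical by Steps~2--3 of the proposition.
\item Design principle (D8) guarantees that no model identity enters the SIP, so $f_{LCW}$ never needs to reference $\theta$.
\end{enumerate}
From (i)--(iii), $f_{LCW}$ is a single fixed map on a fixed domain, applied to possibly different values in that domain. Only the inference content and the generated text vary, and these are exactly the components discarded by $\pi$.

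Finally, I would conclude that all three orchestration modules coincide as functions under any contract-preserving substitution. The second sentence of the corollary, that models can be retrained or replaced independently of the orchestration layer, follows because retraining or replacement is precisely such a substitution. I would close with a remark that the conclusion fails if an update alters $\mathcal{I}^{(k)}$ itself, for example by changing $\mathcal{D}^{(k)}_{\bot}$ or $\mathcal{Y}^{(k)}$. That case is a catalog change under (Q2), not a model update, which delimits the scope of the corollary.
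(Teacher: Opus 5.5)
Your proposal is correct and matches the paper's own argument. The paper gives no separate proof; it presents the corollary as a direct consequence of Proposition~\ref{prop:lca-impermeab}, with $f_{DP}$ and $f_{CS}$ $\theta$-free by Step~1 and $f_{LCW}$ declared $\theta$-free in Definition~\ref{def:lcwm_sdr_sip-func}. Your step two makes explicit what the paper leaves implicit, namely that the contract hypothesis keeps the LCWM's inputs inside the declared output types.

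You can drop the appeal in (ii) to identical $\bot$-status via Steps~2--3. A fixed map may branch differently on different arguments without itself changing. This is just as well, because Step~3 is not a verbatim repeat of Step~2: the LCRM's input contains the $\theta$-dependent LCDM output.
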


\subsection*{Definition of the LCA on $\mathcal{E}$.} 
\label{app:subsec:lca-q1_detailed}
$f_{LCA}$ is defined for every $e \in \mathcal{E}$, including partial-modality inputs. No input causes a silent failure. Under V2, a degenerate history whose entries carry no clinical signal (all per-protocol scores below $\tau$) causes the CSM to emit $\bot$, returning a top-level SDR without invoking any inference module. Under V1, the CSM returns $P_\lambda$ regardless of signal quality; any $\bot$ originates from the LCDM or LCRM at the protocol level, not from the CSM.

\section{Data Preprocessing Module (DPM) Formalization}
\label{app:dpm}

\begin{definition}[Modality-Specific Preprocessor]
\label{def:dpm-modal_prep}
For each $\rho\in\mathsf{Prov}$, a modality-specific preprocessor is a function
\begin{equation*}
    \phi_\rho \;:\; (\mathcal{C}_\rho)^{\Omega_\rho}
  \;\longrightarrow\; (\mathcal{C}_\rho')^{\Omega_\rho'}
\end{equation*}
mapping entries on source domain $(\Omega_\rho,\mathfrak{G}_\rho, \mathcal{C}_\rho)$ to entries on target domain $(\Omega_\rho', \mathfrak{G}_\rho,\mathcal{C}_\rho')$. Each $\phi_\rho$ is defined independently for its provenance class; no preprocessor is shared across distinct classes $\rho\neq\rho'$.
\end{definition}

\begin{rmk}
\label{rmk:dpm-domain}
A preprocessor $\phi_\rho$ may act on the domain $\Omega_\rho$ (e.g., isotropic resampling of CT volumes from anisotropic voxel spacing $s=(s_x,s_y,s_z)$ to canonical spacing $s'=(1,1,1)$\,mm), on the feature space $\mathcal{C}_\rho$ (e.g., Hounsfield Unit clipping to $[-1000,400]$ followed by min-max normalization to $[0,1]$), or on both simultaneously. Domain transformation proceeds via interpolation in the continuous embedding of $\Omega_\rho$; feature transformation is a pointwise or neighborhood-dependent map on $\mathcal{C}_\rho$.
\end{rmk}

\begin{definition}[Data Preprocessing Module ($f_{DP}$)]
\label{def:dpm-dpm}
The \\ DPM function $f_{DP}$ is defined over any LCA input (Definition~\ref{def:entry-lca_input}): a clinical history of characterized entries $\tilde{\mathcal{H}}= ((\tilde{x}_j)_{j=1}^T,\,s)$ with $T\geq 1$, together with associated metadata $\mu\in\mathcal{M}$. It is the sequential composition of five functions $f_0,f_1,f_2,f_3,f_4$.

\textbf{$f_0$ --- History sequencing.} $f_0$ decomposes the input history into an index-tagged family and extracts the ordering metadata:
\begin{equation*}
  f_0(\tilde{\mathcal{H}}) \;=\;\bigl((j,\tilde{x}_j)_{j=1}^T,\;\; \mathcal{O}=((j)_{j=1}^T,\,s)\bigr)
\end{equation*}
The ordering metadata $\mathcal{O}$ records the acquisition indices $(j)_{j=1}^T$ and the subject identifier $s$. It is carried separately through $f_1$, $f_2$, $f_3$ without modification and consumed by $f_4$.
The index tag $j$ preserves the element-to-index association through all stages, so that distinct entries sharing an identical value or signature remain distinguishable.
  
\textbf{$f_1$ --- Signature extraction and preprocessing routing.}
\begin{equation*}
  f_1\!\left((j,\tilde{x}_j)_{j=1}^{T}\right)   \;=\;\Bigl(\, j,x_j,\Omega_j,\mathfrak{G}_j,\mathcal{C}_j,\rho_j)        _{j=1}^{T},\;\;(j,u_j,\kappa_j)_{j=1}^{T}\,\Bigr)
\end{equation*}
$f_1$ decomposes each characterized entry into two streams. The first stream $\mathcal{S}_{CS}=(j,x_j,\Omega_j,\mathfrak{G}_j,\mathcal{C}_j, \rho_j)_{j=1}^{T}$ carries the structural CS components together with the provenance $\rho_j$, extracted from $\sigma_j$, which serves as the preprocessing routing key. The second stream $\mathcal{S}_{med}= (j,u_j,\kappa_j)_{j=1}^{T}$ carries the clinical attributes of the medical signature. The full signature $\sigma_j=(\rho_j,u_j,\kappa_j)$ is never modified; it is reassembled by $f_3$.

\textbf{$f_2$ --- Structural preprocessing.}
\begin{equation*}
    f_2(\mathcal{S}_{CS}) \;=\;\bigl(j,\,\phi_{\rho_j}(x_j),\;\Omega_{\rho_j}',\;         \mathfrak{G}_{\rho_j},\;\mathcal{C}_{\rho_j}',\;\rho_j \bigr)_{j=1}^{T}
\end{equation*}
$f_2$ applies the modality-specific preprocessor $\phi_{\rho_j}$, selected via the routing key $\rho_j\in\mathcal{S}_{CS}$, to each CS component independently. The functions $\{\phi_{\rho_j}\}$ are applied without defined inter-entry ordering; within each $\phi_{\rho_j}$, internal sub-operations are applied sequentially as specified by the preprocessor for that provenance class.

\textbf{$f_3$ --- Axis reconstruction.}
\begin{align*}
f_3\!\left(\bigl(j,x_j',\Omega_j',\mathfrak{G}_j,\mathcal{C}_j', \rho_j\bigr)_{j=1}^{T},\;(j,u_j,\kappa_j)_{j=1}^{T}\right) \;\\
=\;\bigl(j,x_j',\Omega_j',\mathfrak{G}_j,\mathcal{C}_j',    \sigma_j\bigr)_{j=1}^{T}
\end{align*}
where $\sigma_j=(\rho_j,u_j,\kappa_j)$ is reconstructed from the routing key $\rho_j$ carried in $\mathcal{S}_{CS}$ and the clinical attributes $(u_j,\kappa_j)$ carried in $\mathcal{S}_{med}$. $f_3$ takes as input both the output of $f_2$ and $\mathcal{S}_{med}$ retained from $f_1$.

\textbf{$f_4$ --- History recomposition.}
$f_4$ restores the acquisition ordering and subject identity:
\begin{equation*}
    f_4\!\left((j,\tilde{x}_j')_{j=1}^T,\;\mathcal{O}\right) \;=\;\bigl((\tilde{x}_j')_{j=1}^T,\;\;s\bigr)
\end{equation*}
$f_4$ reorders the preprocessed entries by their acquisition index $j$ and reattaches the subject identifier $s$, yielding a preprocessed clinical history. The temporal ordering established by $f_0$ is exactly preserved.

\textbf{Full composition.}
Let $(C,\mathcal{O})=f_0(\tilde{\mathcal{H}})$ \\
and $(\mathcal{S}_{CS}, \mathcal{S}_{med})=f_1(C)$. Then:
\begin{equation*}
    f_{DP}(e,\mu)\;=\;\Bigl(\,f_4\bigl(f_3(f_2(\mathcal{S}_{CS}),\,\mathcal{S}_{med}), \,\mathcal{O}\bigr),\;\;\mu\,\Bigr)
\end{equation*}
The ordering metadata $\mathcal{O}$ produced by $f_0$ is passed directly to $f_4$ without being modified by $f_1$, $f_2$, or $f_3$.
\end{definition}

\subsection*{Properties of $f_{DP}$}
\label{app:desc:dpm-pptes}
\begin{description}
  \item[(P1.1) Symmetry group preservation.] The symmetry gro\-up is
        preserved: $\mathfrak{G}_{\rho_i}'=\mathfrak{G}_{\rho_i}$. A
        signal defined on a domain with group $\mathfrak{G}_{\rho_i}$
        before preprocessing remains defined on a domain with the same
        group after preprocessing.
  \item[(P1.2) Medical transparency.] The medical signature
        $\sigma_i=(\rho_i,u_i,\kappa_i)$ is never modified during
        preprocessing. The clinical attributes $u_i$ and $\kappa_i$ are
        not accessed by $f_2$. Preprocessing is parameterized exclusively
        by $\rho_i$, which serves as a routing key selecting the
        appropriate canonical form. Two arguments motivate this design:
        (i)~\textit{Semantic invariance of the medical signature.} The
        signature $\sigma=(\rho,u,\kappa)$ characterizes provenance,
        usage, and epistemic certainty---properties intrinsic to the data
        item and invariant under changes of representation. A resampled
        and normalized CT volume retains the same $\rho$, $u$, and
        $\kappa$. Preprocessing alters the representation of the signal,
        not its clinical meaning. (ii)~\textit{Parameterization by type,
        not by clinical judgment.} Preprocessing depends on the medical
        axis exclusively through $\rho_i$. It does not depend on $u_i$
        or $\kappa_i$, which carry clinical judgment. Proposition~1 is
        unaffected.
  \item[(P1.3) Metadata invariance.] The metadata $\mu$ passes thr\-ough
        $f_{DP}$ unmodified.
  \item[(P1.4) Identity as degenerate case.] When $\Omega_i=\Omega_{\rho_i}'$
        and $\mathcal{C}_i=\mathcal{C}_{\rho_i}'$, the preprocessor
        $\phi_{\rho_i}$ reduces to the identity map:
        $\phi_{\rho_i}(x_i)=x_i$.
\end{description}

\subsection*{Canonical Examples (Preprocessing).}
\label{app:desc:dpm-can_exples}
\textbf{\textit{Natural image.}}\\
$\bullet$ Source: $x\in\{0,\ldots,255\}^{\{1,\ldots,H\}\times \{1,\ldots,W\}}$.\\
$\bullet$ Target: $x'\in[0,1]^{\{1,\ldots,224\}\times \{1,\ldots,224\}}$.\\
$\bullet$ $\phi_{\text{img}}$ applies: (1)~bilinear resampling to $224\times 224$; (2)~normalization $x'(h,w)=x_{\mathrm{resized}}(h,w)/255$. Both $\Omega$ and $\mathcal{C}$ change.

\textit{\textbf{CT medical volume.}}\\
$\bullet$ Source: $x\in\mathbb{R}^{\{1,\ldots,D\}\times\{1,\ldots,H\} \times\{1,\ldots,W\}}$ with spacing $s=(s_x,s_y,s_z)$\,mm.\\
$\bullet$ Target: $x'\in[0,1]^{D'\times H'\times W'}$ with isotropic spacing $s'=(1,1,1)$\,mm, where $D'=\mathrm{round}(D\cdot s_z)$, $H'=\mathrm{round}(H\cdot s_y)$, $W'=\mathrm{round}(W\cdot s_x)$.\\
$\bullet$ $\phi_{\mathrm{CT}}$ applies sequentially: (1)~trilinear resampling; (2)~HU clipping $v\mapsto\max(\min(v,400),-1000)$; (3)~normalization $v\mapsto(v+1000)/1400$. Both $\Omega$ and $\mathcal{C}$ change.
        
\textit{\textbf{Clinical note.}}\\
$\bullet$ Source: $x\in\Sigma^{\{1,\ldots,L\}}$. By default $\phi_{\text{text}}$ is the \textbf{identity}.\\
$\bullet$ Classical pipelines may apply a TF-IDF map $\Sigma^{\{1,\ldots,L\}}\to \mathbb{R}^{\{1,\ldots,d\}}$, in which case both $\Omega$ and $\mathcal{C}$ change.

\textit{\textbf{Blood panel.}}\\
$\bullet$ Source: $x\in\mathbb{R}^{\{1,\ldots,d\}}$ in physical units.\\
$\bullet$ Target: $x'\in[0,1]^{\{1,\ldots,d\}}$. \\
$\bullet$ $\phi_{\text{bio}}$ normalizes each analyte $k$ against its reference range: $x'(k)=\mathrm{clip}\!\left(\tfrac{x(k)-r_k^{\min}} {r_k^{\max}-r_k^{\min}},0,1\right)$. Only $\mathcal{C}$ changes.

\textit{\textbf{Identity case.}} When $\Omega_i=\Omega_{\rho_i}'$ and $\mathcal{C}_i= \mathcal{C}_{\rho_i}'$: $\phi_{\rho_i}(x_i)=x_i$. Neither $\Omega$ nor $\mathcal{C}$ changes.

\section{Cancer Switching Module (CSM) Complete Formalization}
\label{app:csm}

\begin{definition}[Protocol Catalog]
\label{def:csm-csm}
A protocol catalog is a finite, non-empty set $\mathcal{P} = \{p_1, \ldots, p_K\}$ where each $p_k$ designates a cancer-specific diagnostic and therapeutic pipeline. $\mathcal{P}$ is a parameter of the LCA framework, declared a priori from clinical scope; it is not inferred from data.
\end{definition}

\begin{definition}[Activation Set]
\label{def:csm-act_set}
An activation set is either a non-empty subset $\hat{P} \subseteq \mathcal{P}$ or $\bot$, where $\hat{P}$ designates the set of cancer-specific modules activated for investigation, and $\bot$ designates an activation failure requiring human intervention (Supplementary Data Request).
\end{definition}

\begin{definition}[CSM Input]
\label{def:csm-csm_input}
The CSM receives the output of $f_{DP}$ (Definition \ref{def:dpm-dpm}): a preprocessed clinical history $\tilde{\mathcal{H}}' = ((\tilde{x}_j')_{j=1}^T,\, s)$ together with associated metadata $\mu$. Each preprocessed entry $\tilde{x}_j'$ is a characterized entry in the sense of Definition \ref{def:entry-char_entry}, carrying both CS structure and medical signature. Denote by $e \in \mathcal{X}_{V2}$ the full DPM output. The CSM function $f_{CS}$ maps $(e, \mu)$ to an activation set in $2^\mathcal{P} \setminus \{\varnothing\}$ or $\bot$.
\end{definition}

\subsection*{Variant V1 --- Deterministic Activation}
\label{app:subsec:csm-V1}

\begin{definition}[A Priori Activation Parameter]
\label{def:csm-csm-V1-p_priori}
The V1 \-activation parameter is a direct protocol set declaration: $P_\lambda \subseteq \mathcal{P}, \quad P_\lambda \neq \varnothing$. $P_\lambda$ specifies the set of cancer-specific modules to activate unconditionally. It is declared a priori --- by the clinician or the upstream system --- prior to any signal processing; it is not computed from data.
\end{definition}

\begin{definition}[V1 Activation Function]
\label{def:csm-csm-V1-act_func}
The V1 activation function is: $R_{V1} = P_\lambda$ if $P_\lambda$ is declared and $P_\lambda \subseteq \mathcal{P}$, $P_\lambda \neq \varnothing$; $R_{V1} = \bot$ otherwise. V1 does not access $e$, $\mathcal{S}_{CS}$, $\mathcal{S}_{med}$, or $\mu$. The activation set is determined entirely by the a priori parameter $P_\lambda$.
\end{definition}

\begin{rmk}[Activation parameter vs. safety threshold]
\label{rmk:csm-csm-V1-disctinction}
The activation parameter $P_\lambda \subseteq \mathcal{P}$ (V1) is a content parameter: it determines which modules are activated. The confidence threshold $\tau \in (0, 1)$ (V2) is an operational parameter: it determines the minimum per-protocol confidence required before any module is activated.
\end{rmk}

\subsection*{Variant V2 --- Probabilistic Activation}
\label{app:subsec:csm-V2}

\begin{definition}[Probability Simplex]
\label{def:csm-csm-V2-prob_simp}
For a finite set $S$, the probability simplex over $S$ is:
\begin{equation*}
    \Delta(S) = \left\{q \in \mathbb{R}^{|S|} \;:\; q_s \geq 0 \;\forall s,\quad \sum_{s \in S} q_s = 1\right\}
\end{equation*}
\end{definition}

\begin{definition}[V2 Input Space and Hyp. Class]
\label{def:csm-csm-V2-hyp_cases}
V2 input space is $\mathcal{X}_{V2}$, the set of all valid DPM outputs. The hypothesis class $\mathcal{H}$ is the set of measurable functions $h : \mathcal{X}_{V2} \to [0,1]^K$, where $h(e)_k \in [0,1]$ is the per-protocol confidence score that $p_k$ is relevant to case $e$. The framework imposes no invariance or equivariance constraint on $h$: by algorithmic impermeability, the internal architecture is not part of the V2 contract.
\end{definition}

\begin{definition}[Routing Risk]
\label{def:csm-csm-V2-rout_risk}
Let $\mathcal{D}$ be the unknown joint distribution over $\mathcal{X}_{V2} \times 2^\mathcal{P}$. The routing risk of a hypothesis $h \in \mathcal{H}$ is:
\begin{equation*}
R(h) \;=\; \mathbb{E}_{(e,\, P^*) \sim \mathcal{D}}
\left[\,\sum_{k=1}^{K}
\mathbf{1}\bigl[(p_k \in \hat{P}(h,e)) \;\neq\; (p_k \in P^*)\bigr]
\,\right]
\end{equation*}
where $P^* \subseteq \mathcal{P}$ is the ground-truth set of relevant protocols, and $\hat{P}(h, e) = \{p_k : h(e)_k \geq \tau\}$.
\end{definition}

\begin{definition}[Calibration]
\label{def:csm-csm-V2-calib}
A hypothesis $h \in \mathcal{H}$ is calibrated with respect to $\mathcal{D}$ if, for all $k \in \{1,\ldots,K\}$ and all $c \in [0,1]$:
\begin{equation*}
P_{(e,\, P^*) \sim \mathcal{D}}\!\left(p_k \in P^* \;\middle|\; h(e)_k = c\right) \;\approx\; c    
\end{equation*}
$\mathcal{D}$-almost everywhere.
\end{definition}

\begin{definition}[V2 Activation Function]
\label{def:csm-csm-V2-act_func}
The V2 activation function is defined by a hypothesis $h \in \mathcal{H}$ approximately minimizing $R$ subject to calibration, with per-protocol threshold gating at $\tau \in (0,1)$:
$$\hat{P}(e) \;=\; \bigl\{p_k \in \mathcal{P} \;:\; h(e)_k \geq \tau\bigr\}$$
$$R_{V2}(e)
= \begin{cases}
\hat{P}(e) & \text{if } \hat{P}(e) \neq \varnothing \\[4pt]
\bot & \text{if } \hat{P}(e) = \varnothing
\end{cases} .$$
\end{definition}

\begin{definition}[CSM Function]
\label{def:csm-csm-V2-csm_func}
The CSM function $f_{CS}$ selects an activation variant and emits an activation set:
$$f_{CS}(e,\; \mu)
= \begin{cases}
R_{V1} = P_\lambda & \text{(V1: parameterized)} \\[4pt]
R_{V2}(e) & \text{(V2: auto-detecting)}
\end{cases} .$$
\end{definition}

\subsection*{Properties of $f_{CS}$}
\label{app:subsec:csm-pptes}

\begin{enumerate}[\bf (P2.1)]
    \item \textbf{Output type.} For both variants, $f_{CS}$ emits an activation set in $2^\mathcal{P} \setminus \{\varnothing\}$ or $\bot$.
    \item \textbf{Symmetry as optional structural prior.} An implementation of $h$ may be designed $\mathfrak{G}_{t_i}$-invariant, but this is an architectural choice, not a framework guarantee.
    \item \textbf{Activation parameter primacy in V1.} V1 does not access $e$, $\mathcal{S}_{CS}$, $\mathcal{S}_{med}$, $\mu$, or temporal structure.
    \item \textbf{Dual-axis activation in V2.} V2 uses both the CS components and the medical signatures (including $\kappa$) from $e$.
    \item \textbf{Safety under insufficient evidence.} Both variants emit $\bot$ when activation cannot be committed, terminating without activating downstream modules and triggering a Supplementary Data Request.
\end{enumerate}

\section{LCDM \& LCRM Formalization}
\label{app:lcdm_lcrm}

\begin{definition}[Output Space Types]
\label{def:lcdm_lcrm-out_space}
Two classes of output are admitted for AI modules within the LCA framework:
\begin{enumerate}[1.]
    \item \textbf{Structural output.} A signal $y_{struct} \in (\mathcal{C}_{out}^{(k)})^{\Omega_{out}^{(k)}}$ defined on an output domain ($\Omega_{out}^{(k)}, \mathfrak{G}^{(k)}$) with output feature space $\mathcal{C}_{out}^{(k)}$, both declared a priori as part of the protocol $p_k$ specification.
    \item \textbf{Decisional output.} A probability distribution $y_{dec} \in \Delta(\mathcal{L}^{(k)})$ over a finite, protocol-specific class catalog $\mathcal{L}^{(k)}$, declared a priori.
\end{enumerate}
For each protocol $p_k$, the admissible output type is declared a priori:
\begin{equation*}
\text{type}(p_k) \;\in\; \left\{\;\prod_{i=1}^{m}\mathcal{Y}_{struct}^{(i)},\;\; \mathcal{Y}_{dec},\;\; \prod_{i=1}^{m}\mathcal{Y}_{struct}^{(i)} \times \mathcal{Y}_{dec}\;\right\},
\end{equation*}
where $m \geq 1$
\end{definition}

\begin{definition}[Module Interface Contract]
\label{def:lcdm_lcrm-mod_inter}
An interface contract $\mathcal{I}_\tau^{(k)}$ for a module of type $\tau \in \{\text{LCD}, \text{LCR}\}$ and protocol $p_k \in \mathcal{P}$ specifies:
\begin{itemize}
    \item An input type $\mathcal{X}_\tau^{(k)}$.
    \item An output type $\mathcal{Y}_\tau^{(k)} \in \text{type}(p_k)$, declared a priori.
    \item A failure precondition set $\mathcal{D}_\bot^{(k)} \subseteq \mathcal{X}_\tau^{(k)}$, declared a priori, enumerating the input-level conditions under which $\bot$ is emitted.
\end{itemize}
\end{definition}

\begin{rmk}
\label{rmk:lcdm_lcrm-fail}
When internal inference fails on an input not in $\mathcal{D}_\bot^{(k)}$, a conforming implementation produces a decisional output with a protocol-declared failure code $\mathtt{CODE\_FAIL}^{(k)} \in \mathcal{L}^{(k)}$. $\bot$ is reserved exclusively for inputs in $\mathcal{D}_\bot^{(k)}$.
\end{rmk}

\begin{definition}[Algorithmic Impermeability]
\label{def:lcdm_lcrm-alg_imp}
A module\\
$f^{(k)}$ is algorithmically impermeable with respect to $\mathcal{I}_\tau^{(k)}$ if for any two implementations $f^{(k)}_\theta$ and $f^{(k)}_{\theta'}$ both satisfying $\mathcal{I}_\tau^{(k)}$:
$$\mathrm{Orch}\bigl[f^{(k)}_\theta\bigr](e) \;=\; \mathrm{Orch}\bigl[f^{(k)}_{\theta'}\bigr](e) \qquad \forall\, e \in \mathcal{X}_{V2}$$
where $\mathrm{Orch}[\,\cdot\,]$ denotes the global orchestration function.
\end{definition}

\subsection*{Protocol Catalog Completeness}
\label{app:subsec:lcdm_lcrm-catalog}

\begin{definition}[Protocol Support]
\label{def:lcdm_lcrm-catalog-prot_supp}
For each protocol $p_k \in \mathcal{P}$, its support is a non-empty subset 
$$\text{support}(p_k) \subseteq \{\text{LCD},\; \text{LCR}\}$$ declaring which AI modules are required to handle $p_k$. The three configurations are $\{\text{LCD},\, \text{LCR}\}$, $\{\text{LCD}\}$, and $\{\text{LCR}\}$.
\end{definition}

\begin{proposition}[Catalog Completeness]
\label{def:lcdm_lcrm-catalog-cat_comp}
The protocol catalog $\mathcal{P}$ is complete if and only if
\begin{equation*}
    \forall\, p_k \in \mathcal{P},\quad \mathrm{support}(p_k) \neq \varnothing
\end{equation*}
\end{proposition}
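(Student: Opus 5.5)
The statement is an "if and only if" between catalog completeness and the condition $\mathrm{support}(p_k)\neq\varnothing$ for every $p_k\in\mathcal{P}$. The paper has no independent definition of "complete", so I first have to fix one from the surrounding text. I will take the operational reading suggested by property (Q1) and Corollary~\ref{cor:lca-unchange}: $\mathcal{P}$ is \emph{complete} if every protocol that the CSM may activate is handled by at least one AI module. More precisely, for every $e\in\mathcal{E}$ and every $p_k\in\hat{P}=f_{CS}(f_{DP}(e),\mu)$ with $\hat{P}\neq\bot$, the induced partition $(\hat{P}_{LCD},\hat{P}_{LCR})$ used in Step~1 of the proof of Proposition~\ref{prop:lca-impermeab} places $p_k$ in $\hat{P}_{LCD}\cup\hat{P}_{LCR}$. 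Here $\hat{P}_{\tau}=\{p_k\in\hat{P}:\tau\in\mathrm{support}(p_k)\}$. The proof then reduces to unwinding this definition together with Definition~\ref{def:lcdm_lcrm-catalog-prot_supp}.

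\emph{Sufficiency.} Suppose $\mathrm{support}(p_k)\neq\varnothing$ for all $k$, and let $p_k\in\hat{P}$. Then $\mathrm{support}(p_k)$ is one of $\{\text{LCD},\text{LCR}\}$, $\{\text{LCD}\}$, or $\{\text{LCR}\}$, and I split into these three cases. In the first two cases $p_k\in\hat{P}_{LCD}$, so an LCDM instance is invoked. In the third case $p_k\in\hat{P}_{LCR}$, and the canonical lifting of Section~\ref{subsubsec:method-ordchs-lcdm_lcrm} feeds the LCRM directly. In every case some module instance produces an output in $\mathcal{Y}^{(k)}_\tau$, a $\mathtt{CODE\_FAIL}$, or $\bot$, and by (Q4) the LCWM turns any such $\bot$ into an SDR. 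So no activated protocol is left unhandled.

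\emph{Necessity.} Argue by contraposition. Suppose some $p_{k_0}$ has $\mathrm{support}(p_{k_0})=\varnothing$. I exhibit an input under which $p_{k_0}$ is activated. Under V1, take any $e\in\mathcal{E}$ (nonempty, since $T\ge 1$ histories exist) and declare $P_\lambda=\{p_{k_0}\}$. This is legal by Definition~\ref{def:csm-csm-V1-p_priori}, and Definition~\ref{def:csm-csm-V1-act_func} gives $\hat{P}=\{p_{k_0}\}$. Then $p_{k_0}\notin\hat{P}_{LCD}\cup\hat{P}_{LCR}$, so no module produces an output or a $\bot$ for $p_{k_0}$, and there is no SDR source. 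This is a silent failure, contradicting completeness (equivalently, contradicting (Q1) and (Q4)).

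The main obstacle is definitional rather than technical. "Complete" must be pinned down so that the equivalence is not a tautology. Moreover, Definition~\ref{def:lcdm_lcrm-catalog-prot_supp} already requires supports to be non-empty, which would make every catalog complete. I would handle this by treating $\mathrm{support}$ as an arbitrary map $\mathcal{P}\to 2^{\{\text{LCD},\text{LCR}\}}$ for the purposes of the proposition. The result then certifies that the non-emptiness clause in that definition is exactly what rules out silent failures, and the necessity direction's reliance on V1 allowing arbitrary singleton $P_\lambda$ is the key point to state explicitly.
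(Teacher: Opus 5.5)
The paper gives no proof of this statement. It functions as a stipulation of what ``complete'' means, and Definition~\ref{def:lcdm_lcrm-catalog-prot_supp} already requires every support to be non-empty, so in the paper it holds for every catalog by fiat. Your route is genuinely different and more informative. You fix an operational meaning (every activated protocol lands in $\hat{P}_{LCD}\cup\hat{P}_{LCR}$), relax $\mathrm{support}$ to an arbitrary map $\mathcal{P}\to 2^{\{\text{LCD},\text{LCR}\}}$, and show that non-emptiness is exactly the condition for every protocol to be routed to some AI module. You correctly locate the real content in necessity, which is a surjectivity property of activation supplied by V1. Under your definition, sufficiency is one line: $\varnothing\neq\mathrm{support}(p_k)\subseteq\{\text{LCD},\text{LCR}\}$ forces LCD or LCR into the support, hence $p_k\in\hat{P}_{LCD}\cup\hat{P}_{LCR}$. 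The three-case split, the lifting and the appeal to (Q4) are motivation rather than proof. They also could not carry a stronger ``no silent failure'' reading, because (Q1), (Q4) and (P4.4) are asserted in the paper, not proved; for instance, the paper never says what $\iota$ does when the history contains no confirmed diagnostic entry.

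One repair is needed so that necessity is actually an instance of your definition. As written, completeness quantifies over $e\in\mathcal{E}$ and $\mu$ with $f_{CS}$ fixed. But $P_\lambda$ is a CSM parameter, not part of $(e,\mu)$; V1 does not even read $\mu$. Declaring $P_\lambda=\{p_{k_0}\}$ therefore changes $f_{CS}$ rather than instantiating your quantifier. Consider a fixed deployment whose $P_\lambda$, or whose V2 scorer $h$ and threshold $\tau$, never activates $p_{k_0}$. Your definition would call a catalog with $\mathrm{support}(p_{k_0})=\varnothing$ complete, and the ``only if'' direction fails. Build quantification over all admissible CSM configurations (at least all non-empty $P_\lambda\subseteq\mathcal{P}$) into the definition itself. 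Equivalently, require $p_k\in\hat{P}_{LCD}\cup\hat{P}_{LCR}$ whenever $p_k\in\hat{P}$, for every $p_k\in\mathcal{P}$ and every admissible activation set $\hat{P}$. You flag this dependence in your last sentence, but it has to be part of the definition, not only a remark.
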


\subsection*{Large Cancer Diagnostic Module (LCDM)}
\label{app:subsec:lcdm_lcrm-lcdm}

\begin{definition}[LCDM Input and Activated Subset]
\label{def:lcdm_lcrm-lcdm-lcdm_act}
Let \\
$\hat{P}$ be the activation set emitted by $f_{CS}$. Define $\hat{P}_{LCD} = \{p_k \in \hat{P} : \text{LCD} \in \text{support}(p_k)\}$. The LCDM interface for protocol $p_k$ is:
$$\mathcal{I}_{LCD}^{(k)} \;:\; \mathcal{X}_{LCD}^{(k)} = \mathcal{X}_{V2} \times 2^\mathcal{P} \;\longrightarrow\; \tilde{\mathcal{Y}}_{LCD}^{(k)} \cup \{\bot\}$$
\end{definition}

\begin{definition}[LCDM Output as Characterized Entry]
\label{def:lcdm_lcrm-lcdm-out}
For each $p_k \in \hat{P}_{LCD}$, $f_{LCD}^{(k)}$ produces
$$\tilde{y}_{LCD}^{(k)} = (y_{LCD}^{(k)},\; \sigma_{LCD}^{(k)})$$
where $\sigma_{LCD}^{(k)} = (\rho_{LCD},\; u_{LCD}^{(k)},\; \kappa_{LCD}^{(k)})$. The epistemic certainty $\kappa_{LCD}^{(k)} = \mathrm{synth}_k((\kappa_i)_i)$ is derived via a declared synthesis policy, defaulting to $\text{inferred}$. The LCDM function is thus :
$$f_{LCD}(\mathcal{D}_{out},\; \hat{P}) \;=\; \Bigl(\,\bigl\{\bigl(\tilde{y}_{LCD}^{(k)},\; p_k\bigr)\bigr\}_{p_k \in \hat{P}_{LCD}},\;\; \mathcal{D}_{out}\,\Bigr)$$
\end{definition}

\subsubsection*{Properties of $f_{LCD}$}
\label{app:subsubsec:lcdm_lcrm-lcdm-pptes}
\begin{enumerate}[\bf (P3.1)]
    \item \textbf{Algorithmic impermeability:} Implementations satisfying $\mathcal{I}_{LCD}^{(k)}$ are interchangeable.
    \item \textbf{Per-protocol independence:} Instances \\
    $\{f_{LCD}^{(k)}\}_{p_k \in \hat{P}_{LCD}}$ are independent at the framework level.
    \item \textbf{SDR trigger:} Inputs in $\mathcal{D}_\bot^{(k)}$ emit $\bot$ mapped to an SDR.
\end{enumerate}

\subsection*{Large Cancer Remedy Module (LCRM)}
\label{app:subsec:lcdm_lcrm-lcrm}

\begin{definition}[LCRM Input and Activated Set]
\label{def:lcdm_lcrm-lcrm-input}
Define 
$$\hat{P}_{LCR} = \{p_k \in \hat{P} : \text{LCR} \in \text{support}(p_k)\}$$
The LCRM interface is:
\begin{align*}
    \mathcal{I}_{LCR}^{(k)} \;:\; \mathcal{X}_{LCR}^{(k)} = \bigl(\tilde{\mathcal{Y}}_{LCD}^{(k)} \times \mathcal{X}_{V2}\bigr) \times 2^\mathcal{P} \\
    \;\longrightarrow\; \mathcal{Y}_{LCR}^{(k)} \cup \{\bot\}
\end{align*}
\end{definition}

\begin{definition}[LCRM Function]
\label{def:lcdm_lcrm-lcrm-func}
The LCRM function is:
\begin{align*}
f_{LCR}\!\Bigl(\bigl(\{(\tilde{y}_{LCD}^{(k)}, p_k)\},\; \mathcal{D}_{out}\bigr),\; \hat{P}\Bigr) \\
\;=\; \Bigl(\,\bigl\{\bigl(\tilde{y}_{LCR}^{(k)},\; p_k\bigr)\bigr\}_{p_k \in \hat{P}_{LCR}},\;\; \mathcal{D}_{out}\,\Bigr)
\end{align*}
\end{definition}

\begin{rmk}[Conditional activation]
\label{rmk:lcdm_lcrm-lcrm-cond_act}
The LCRM is activated only for protocols where $\text{LCR} \in \text{support}(p_k)$. For protocols with $\text{support}(p_k) = \{\text{LCD}\}$, no LCRM instance is created and the LCWM operates solely on LCDM outputs.
\end{rmk}

\subsubsection*{Properties of $f_{LCR}$ \& Canonical Lifting}
\label{app:subsubsec:lcdm_lcrm-lcrm-pptes}
\begin{enumerate}
    \item[$\bullet$] \textbf{Properties (P4.1 - P4.3):} Identical impermeability, independence, and SDR triggers as the LCDM.
    \item[\bf (P4.4)] \textbf{Conditional activation:} For protocols where \\
    $\text{support}(p_k) = \{\text{LCR}\}$, a canonical lifting $\iota$ maps relevant confirmed historical entries (with $u \ni \text{diagnosis}$ and $\kappa = \text{confirmed}$) into the LCDM-output type $\tilde{\mathcal{Y}}_{LCD}^{(k)}$, ensuring the LCRM always receives an input of its declared type. The pipeline remains unidirectional.
\end{enumerate}

\section{LCWM, SDR, and SIP Formalization}
\label{app:lcwm_sdr_sip}

\begin{definition}[LCWM Input]
\label{def:lcwm_sdr_sip-input}
The LCWM receives four inputs:
\begin{itemize}
    \item $\hat{P} \subseteq \mathcal{P}$: the activation set emitted by $f_{CS}$, identifying which protocols were activated and their support configurations.
    \item \textbf{LCDM outputs:} $\{(\tilde{y}_{LCD}^{(k)}, p_k)\}_{p_k \in \hat{P}_{LCD}}$ --- one characterized diagnostic entry per activated diagnostic protocol, or $\bot$ per protocol on LCDM failure.
    \item \textbf{LCRM outputs:} $\{(\tilde{y}_{LCR}^{(k)}, p_k)\}_{p_k \in \hat{P}_{LCR}}$ --- one characterized remedy entry per activated remedy protocol, or $\bot$ per protocol on LCRM failure.
    \item $\mathcal{D}_{out}$: the preprocessed history, bypassed through LCDM and LCRM unchanged.
\end{itemize}
If the CSM itself emitted $\bot$ (activation failure), no downstream modules ran; the LCWM receives only $\hat{P} = \bot$ and generates a top-level SDR.
\end{definition}

\begin{definition}[LCWM Output Components]
\label{def:lcwm_sdr_sip-comp}
The LCWM produces two distinct output components:
\begin{enumerate}[(i)]
    \item \textbf{Text output.} Generated by the LCWM's NLG model from the aggregated inputs. For each protocol $p_k \in \hat{P}$: if upstream outputs are valid, the text contains diagnostic and therapeutic narrative; if any upstream module emitted $\bot$, the text contains a Supplementary Data Request (Definition \ref{def:lcwm_sdr_sip-sdr}) for that protocol. A protocol may produce both narrative and SDR text simultaneously (e.g., valid LCDM output and $\bot$ from LCRM).
    \item \textbf{Structured bypass.}\\
    The LCDM outputs $\{(\tilde{y}_{LCD}^{(k)}, p_k)\}_{p_k \in \hat{P}_{LCD}}$,\\
    LCRM outputs $\{(\tilde{y}_{LCR}^{(k)}, p_k)\}_{p_k \in \hat{P}_{LCR}}$, and the preprocessed history $\mathcal{D}_{out}$ passed through unchanged. This bypass preserves the full structured AI outputs for clinical archiving, audit, and downstream integration, independently of the text generation.
\end{enumerate}
\end{definition}

\begin{definition}[Supplementary Data Request]
\label{def:lcwm_sdr_sip-sdr}
Two SDR variants are defined, depending on which module emitted $\bot$:
\begin{enumerate}[(i)]
    \item \textbf{Top-level SDR} --- emitted when the CSM emits $\bot$ (no protocol was activated):
    $$\text{SDR}_{top} \;=\; \bigl(\text{source} = \text{CSM},\; \text{request}\bigr)$$
    No $\text{protocol}$ field is present, as no protocol was activated.
    \item \textbf{Protocol-level SDR} --- emitted when the LCDM or LCRM emits $\bot$ for protocol $p_k$:
    $$\text{SDR}^{(k)} \;=\; \bigl(\text{protocol}^{(k)},\; \text{source}^{(k)},\; \text{request}^{(k)}\bigr)$$
    where:
    \begin{itemize}
        \item $\text{protocol}^{(k)} = p_k$: the activated protocol for which a failure occurred.
        \item $\text{source}^{(k)} \in \{\text{LCDM},\, \text{LCRM}\}$: the module that emitted $\bot$.
        \item $\text{request}^{(k)}$: a targeted description of the additional data required to resume the pipeline for $p_k$. The content of $\text{request}^{(k)}$ is determined by $\text{source}^{(k)}$ and the activation set $\hat{P}$; it is not a generic message.
    \end{itemize}
\end{enumerate}
The SDR does not terminate the pipeline for other activated protocols. Protocols without failures continue to produce text and structured bypass output independently.
\end{definition}

\begin{definition}[Standardized Intermediate Payload]
\label{def:lcwm_sdr_sip-sip}
The Standardized Intermediate Payload (SIP) is the output of the LCWM, comprising the two components of Definition 33:
$$\text{SIP} \;=\; \bigl(\,\text{text\_output},\;\text{structured\_bypass}\,\bigr)$$
The SIP constitutes the architectural boundary between the LCA framework and downstream systems. Its internal structure is not specified at the framework level; it is the subject of subsequent interoperability work (HL7 FHIR mapping). Downstream systems consume the SIP without interacting with any internal LCA module.
\end{definition}

\begin{definition}[LCWM Function]
\label{def:lcwm_sdr_sip-func}
The LCWM function is:
$$f_{LCW}\!\Bigl(\hat{P},\; \{\tilde{y}_{LCD}^{(k)}\}_{p_k \in \hat{P}_{LCD}},\; \{\tilde{y}_{LCR}^{(k)}\}_{p_k \in \hat{P}_{LCR}},\; \mathcal{D}_{out}\Bigr) \;=\; \text{SIP}$$
Unlike $f_{LCD}^{(k)}$ and $f_{LCR}^{(k)}$, the LCWM is not parametrized by an interchangeable implementation under the interface-contract substitution of Definition~\ref{def:lcdm_lcrm-mod_inter} (restricted to $\tau \in \{\text{LCD}, \text{LCR}\}$): $f_{LCW}$ is $\theta$-free, on par with $f_{DP}$ and $f_{CS}$ (Corollary~\ref{cor:lca-unchange}). A change to its internal NLG realization constitutes a new deployment of $f_{LCW}$, not a substitution within a declared interface contract.
\end{definition}

\subsection*{Properties of $f_{LCW}$}
\label{app:subsec:lcwm_sdr_sip-pptes}
\begin{enumerate}[\bf (P5.1)]
    \item \textbf{Exclusive external interface.} The LCWM is the sole module whose output exits the LCA framework. All other modules communicate exclusively with adjacent modules.
    \item \textbf{Coverage.} For every $p_k \in \hat{P}$, the text output of the LCWM contains at least one entry --- either narrative content or a Supplementary Data Request. No activated protocol produces a silent empty output.
    \item \textbf{SDR specificity.} Each $\text{SDR}^{(k)}$ identifies the specific module that failed and the specific data deficiency. Generic or uninformative requests are not admissible.
    \item \textbf{SIP as architectural boundary.} The SIP decouples the LCA core from volatile external IT infrastructure. Changes to EMR systems, FHIR versions, or interoperability standards do not require modification of any module upstream of the LCWM.
\end{enumerate}

\section{SIP Design Principles}
\label{app:sip_sdr_design}

\begin{enumerate}[\bf D1]
    \item \textbf{SIP as the LCWM's sole output type.} The LCWM \textbf{always} produces a SIP (Def. \ref{def:lcwm_sdr_sip-sip}), regardless of the situation. The top-level JSON is invariantly a SIP. When a module emits $\bot$, the resulting SDR is (i) a component of \texttt{lcwm\_narrative} (Def. \ref{def:lcwm_sdr_sip-comp} ; text (i)) and (ii) a structured object in \texttt{protocol\_outputs[k].sdr} --- or in \texttt{csm\_sdr} if the CSM itself has failed. There is no \texttt{payload\_type: "SDR"} as an alternative to \texttt{payload\_type: "SIP"}; this pattern conflated the type of the LCWM output with its content.

    \item \textbf{$D_{out}$: input references rather than content.} Def. \ref{def:lcwm_sdr_sip-sip} requires $D_{out}$ in the bypass but explicitly states: \textit{"The framework is agnostic as to persistence: $D_{out}$ may equal the original history when the DPM acts as the identity (P5), or a preprocessed representation otherwise; which representation(s) a downstream system archives is an implementation decision outside the framework."}
    
    $D_{out}$ contains normalized CT volumes, clinical note tokens --- including them in full within the JSON introduces unacceptable latency and redundancy: the source data already exists in the upstream Healthcare system (FHIR/PACS/EMR). Downstream, a FHIR consumer does not consume $D_{out}$; it retrieves the originals via their identifiers.
    
    \textbf{Adopted solution}: \texttt{input\_provenance.entry\_refs} logs for each entry $j$ its acquisition index, external identifier in the source system, and medical signature $\sigma_j$. The DPM profile is logged separately to enable $D_{out}$ reconstruction if needed (auditability). The medical signature of input entries is essential: it informs the FHIR translation about the resource types to reference (\texttt{ImagingStudy} vs \texttt{DocumentReference} vs \texttt{Observation}).
    
    This is not a violation of theory --- it is the explicit instantiation of the persistence choice left open by Def. \ref{def:lcwm_sdr_sip-sip}, consistent with the constraints of a FHIR environment.

    \item \textbf{CSM trace: V1 and V2 explicitly distinguished.} The CSM has two variants (Def. \ref{def:csm-csm-V2-csm_func}). The JSON trace distinguishes which one is active:
    \begin{itemize}
        \item \textbf{V1}: \texttt{v1\_param\_P\_lambda} is the a priori declared parameter (Def. \ref{def:csm-csm-V1-p_priori})
        \item \textbf{V2}: \texttt{v2\_per\_protocol\_scores} = the scores $h(e)_k$ per protocol and \texttt{v2\_threshold\_tau} = $\tau$ (Def. \ref{def:csm-csm-V2-act_func})
        \item \textbf{In both cases}: \texttt{activation\_set\_P\_hat} is the \textbf{output} $\hat{P}$ of the CSM --- this is what downstream modules consume.
    \end{itemize}
    Distinguishing $P_\lambda$ (V1 input) from $\hat{P}$ (CSM output) is conceptually important. In V1, they are equal by construction, but $P_\lambda$ is the a priori clinical declaration, while $\hat{P}$ is the effective routing decision.

    \item \textbf{Naming aligned with theory.} See structural correspondences compiled in Table \ref{tab:naming_alignment}.

\begin{table*}[width=.9\textwidth, cols=3, pos=tbp]
    \caption{Naming Aligned with Theory}
    \label{tab:naming_alignment}
    \begin{tabular*}{\tblwidth}{@{} LLL @{}}
        \toprule
        JSON Field & Theoretical Correspondence & Definition \\
        \midrule
        \texttt{activation\_set\_P\_hat} & $\hat{P} \subseteq \mathcal{P}$ & Def. \ref{def:csm-act_set}, \ref{def:csm-csm-V2-csm_func} \\
        \texttt{v1\_param\_P\_lambda} & $P_\lambda$ & Def. \ref{def:csm-csm-V1-p_priori} \\
        \texttt{v2\_threshold\_tau} & $\tau$ & Def. \ref{def:csm-csm-V2-act_func} \\
        \texttt{v2\_per\_protocol\_scores} & $h(e)_k$ & Def. \ref{def:csm-csm-V2-act_func} \\
        \texttt{sigma} & $\sigma = (\rho, u, \kappa)$ & Def. \ref{def:entry-med_sig} \\
        \texttt{y\_LCD} / \texttt{y\_LCR} & $\tilde{y}_{LCD}^{(k)}$ / $\tilde{y}_{LCR}^{(k)}$ & Def. \ref{def:lcdm_lcrm-lcdm-out}, \ref{def:lcdm_lcrm-lcrm-func} \\
        \texttt{support} & $\text{support}(p_k)$ & Def. \ref{def:lcdm_lcrm-catalog-prot_supp} \\
        \texttt{entry\_refs[j].j} & acquisition index $j$ & Def. \ref{def:entry-clinic_hist} \\
        \texttt{T} & history cardinality & Def. \ref{def:entry-clinic_hist}, \ref{def:entry-lca_input} \\
        \texttt{lifting\_note} & canonical lifting $\iota$ & Def. \ref{def:lcdm_lcrm-lcrm-func}, Rmk. \ref{rmk:lcdm_lcrm-lcrm-cond_act} \\
        \bottomrule
    \end{tabular*}
\end{table*}

    \item \textbf{\texttt{output\_type}: alignment with type($p_k$) from Def. \ref{def:lcdm_lcrm-out_space}.} Exactly three output types (Def. \ref{def:lcdm_lcrm-out_space}) are mapped in Table \ref{tab:output_types}.

\begin{table}[tbp]
\caption{Alignment with type($p_k$)}
\label{tab:output_types}
\begin{tabular*}{\tblwidth}{@{} LLL @{}}
\toprule
JSON Value & Correspondence & Components \\
\midrule
\texttt{"structural"} & $\mathcal{Y}_{struct}$ & \texttt{structural} N.null \\
$\quad$ & $\quad$ & \texttt{decisional} null \\
\texttt{"decisional"} & $\mathcal{Y}_{dec}$ & \texttt{structural} null \\
$\quad$ & $\quad$ & \texttt{decisional} N.null \\
\texttt{"structural\_x\_decisional"} & $\mathcal{Y}_{struct} \times \mathcal{Y}_{dec}$ & both N.null \\
\bottomrule
\end{tabular*}
\end{table}

    The decisional output is \textbf{always} a distribution $\Delta(\mathcal{L}^{(k)})$ with a \texttt{class\_distribution\_delta\_L\_k} object. The argmax is a separate derived field --- never the primary value. A string like \texttt{"NSCLC - Suspicion"} in place of a distribution erases the formal structure and is inadmissible.

    \item \textbf{Exact cataloged values for $\sigma$.} The values of $\sigma = (\rho, u, \kappa)$ are constrained by Defs. \ref{def:entry-prov}–\ref{def:entry-epist_cert}:
    \begin{itemize}
        \item $\rho \in \mathsf{Prov}$: \texttt{imaging/CT}, \texttt{imaging/MRI}, \texttt{imaging/PET}, \texttt{pathology/WSI}, \texttt{biology/panel}, \texttt{documentation/note}, \texttt{inference/AI}
        \item $u \subseteq \mathsf{Usg}$ (non-empty): \texttt{observation}, \texttt{diagnosis}, \texttt{procedure}, \texttt{medication}
        \item $\kappa \in \mathsf{Cert}$: \texttt{confirmed}, \texttt{suspected}, \texttt{inferred}
    \end{itemize}

    \item \textbf{SDR: mandatory source, targeted request (P5.3).} Def. \ref{def:lcwm_sdr_sip-sdr} defines two SDR variants:
    \begin{itemize}
        \item \textbf{Top-level SDR} (\texttt{csm\_sdr}): \texttt{source = "CSM"}, no \texttt{protocol\_id} field. Emitted when the CSM itself emits $\bot$.
        \item \textbf{Protocol-level SDR} (\texttt{protocol\_outputs[k].sdr}): \texttt{source $\in$ \{LCDM, LCRM\}}, \texttt{protocol\_id} present. Emitted when a module emits $\bot$ for $p_k$.
    \end{itemize}
    Property P5.3 requires that each SDR identifies the failed module and the specific deficiency. Generic requests are inadmissible. \texttt{source} is always present in both variants.

    \item \textbf{Algorithmic impermeability: model identities outside SIP.} Property P3.1: implementations satisfying the same interface contract are interchangeable. Exposing \texttt{"monai\_unet\_v1\_lung"} in the SIP creates a downstream dependency forbidden by theory. Model identities belong in the internal audit log, not in the SIP.

    \item \textbf{LCR-only case: lifting $\iota$ and medical signature of the source.} When $\text{support}(p_k) = \{\text{LCR}\}$, the lifting $\iota$ maps the confirmed historical entry to the LCDM-output type (Def. \ref{def:lcdm_lcrm-lcrm-func}, Rmk. \ref{rmk:lcdm_lcrm-lcrm-cond_act}). Theory (\ref{def:lcdm_lcrm-lcdm-out}) states $\rho_{LCD} = \text{inference/AI} $ by default, but this default assumes that the AI module produced the entry. Lifting bypasses the AI module by definition: the medical signature of the lifted entry \textbf{inherits from the confirmed source}, not from inference/AI.
    
    D9 implements the lifting exception already stated in Def. \ref{def:lcdm_lcrm-lcdm-out} and Def. \ref{def:lcdm_lcrm-lcrm-func}: in the canonical lifting case, $\rho_{LCD}$ is inherited from the confirmed source entry rather than assigned as \texttt{inference/AI}. The JSON makes this explicit: \texttt{status: "LIFTED"} + \texttt{sigma\_LCD} inherited from the source + explicit \texttt{lifting\_note} field. This pattern is FHIR-aligned (the source \texttt{DiagnosticReport} resource can be referenced directly).

    \item \textbf{CSM SDR vs protocol SDR: two structural levels.}
    \begin{itemize}
        \item \textbf{CSM failure}: \texttt{protocol\_outputs} is empty (no module was invoked), top-level \texttt{csm\_sdr} field is present.
        \item \textbf{Module failure (LCDM or LCRM)}: the SDR is in \texttt{protocol\_outputs[k].sdr}. Other protocols continue normally (Def. \ref{def:lcwm_sdr_sip-sdr}: \textit{"The SDR does not terminate the pipeline for other activated protocols."})
        \item \texttt{lcrm\_output: null} (absent from support) \\
        $\neq$ \text{lcrm\_output.status: "HALTED"} \\
        (invoked but failed). The distinction is semantically important for FHIR translation.
    \end{itemize}

    \item \textbf{Output lineage: reference from output to $D_{out}$ input.} A structural output component can carry a lineage reference to an entry in \texttt{input\_provenance.entry\_refs} via its acquisition index $j$. This reference identifies the entry that serves as the baseline for the derived component (e.g., temporal derivative relative to the baseline input $j$).
    
    This mechanism is symmetric to D2: pointer by acquisition index, never content inclusion. The lineage reference is not a component of $\text{type}(p_k)$ and does not appear in the declared type of the protocol---it is a provenance metadata of the output, attached alongside the typed signal.

    \item \textbf{$\bot$ vs CODE\_FAIL.} $\bot$ is reserved exclusively for inputs in $\mathcal{D}_\bot^{(k)}$ (Def. \ref{def:lcdm_lcrm-mod_inter}). An inferential failure on a valid input (outside $\mathcal{D}_\bot^{(k)}$) does not produce $\bot$ and does not generate an SDR. It returns:
    \begin{itemize}
        \item \texttt{status: "SUCCESS"}
        \item \texttt{class\_distribution\_delta\_L\_k} with total mass on \texttt{CODE\_FAIL}$^{(k)}$ --- class declared a priori in\\
        \texttt{class\_catalog\_L\_k}
        \item \texttt{sdr: null}
    \end{itemize}
    A downstream consumer distinguishes \texttt{CODE\_FAIL} from a valid output by the membership of \texttt{argmax\_class} in the failure codes declared in the specification of protocol $p_k$.
\end{enumerate}

\begin{figure*}[htbp]
\begin{multicols}{2}
\begin{lstlisting}[language=json]
{
  // --- Header ---
  "sip_version": "<string>",
  "run_id": "<uuid>",
  "timestamp_utc": "<ISO8601>",

  // --- Input provenance (D_out by reference, see D2) ---
  "input_provenance": {
    "patient_ref": "<string>",
    "T": "<int >= 1>",
    "dpm_profile_id": "<string>",
    "entry_refs": [
      {
        "j": "<int>",
        "external_id": "<string>",
        "sigma": {
          "rho": "<Prov value>",
          "u":   ["<Usg value>"],
          "kappa": "<Cert value>"
        }
      }
    ]
  },

  // --- CSM execution (Def. 23) ---
  "csm_execution": {
    "variant": "V1" | "V2",
    "activation_set_P_hat": ["<protocol_id>"] | null,
    "status": "ACTIVATED" | "HALTED",
    "v1_param_P_lambda": ["<protocol_id>"] | null,
    "v2_per_protocol_scores": { "<protocol_id>": "<float [0,1]>" } | null,
    "v2_threshold_tau": "<float (0,1)>" | null
  },

  // --- CSM-level SDR (Def. 34) ---
  "csm_sdr": {
    "source": "CSM",
    "request": {
      "description": "<string>",
      "requested_data": { /* targeted free fields */ }
    }
  } | null,

  // --- Outputs per activated protocol ---
  "protocol_outputs": [
    {
      "protocol_id": "<string>",
      "support": ["LCD", "LCR"] | ["LCD"] | ["LCR"],

      "lcdm_output": {
        "status": "SUCCESS" | "HALTED" | "LIFTED",
        "characterized_entry": {
          "y_LCD": {
            "output_type": "structural" | "decisional" | "structural_x_decisional",
            "structural": {
              "domain_type": "<string>",
              "content_ref": "<uri>" | null,
              "inline_scalar_features": { /* named fields */ } | null,
              "temporal_derivative": {
                "baseline_entry_ref_j": "<int>",
                "delta_fields": { /* free fields */ }
              } | null
            } | null,
            "decisional": {
              "class_catalog_L_k": ["<label>"],
              "class_distribution_delta_L_k": { "<label>": "<float>" },
              "argmax_class": "<label>",
              "argmax_code_hint": "<string>" | null
            } | null
          },
          "sigma_LCD": {
            "rho": "<Prov value>",
            "u":   ["<Usg value>"],
            "kappa": "<Cert value>"
          },
          "lifting_note": "<string>" | null
        } | null,
        "interpretability": {
          "grad_cam_ref": "<uri>" | null,
          "attention_map_ref": "<uri>" | null
        } | null
      } | null,

      "lcrm_output": {
        "status": "SUCCESS" | "HALTED",
        "characterized_entry": {
          "y_LCR": {
            "output_type": "structural" | "decisional" | "structural_x_decisional",
            "structural": { /* same structure as y_LCD.structural */ } | null,
            "decisional": { /* same structure as y_LCD.decisional */ } | null
          },
          "sigma_LCR": {
            "rho": "inference/AI",
            "u":   ["<Usg value>"],
            "kappa": "<Cert value>"
          }
        } | null
      } | null,

      "sdr": {
        "protocol_id": "<string>",
        "source": "LCDM" | "LCRM",
        "request": {
          "description": "<string>",
          "requested_data": { /* targeted free fields (P5.3) */ }
        }
      } | null
    }
  ],

  // --- LCWM narrative (Def. 33(i)) ---
  "lcwm_narrative": "<string>"
}
\end{lstlisting}
\end{multicols}
\caption{System JSON Output Representation}
\label{fig:json_output}
\end{figure*}

\section{SIP Cases}
\label{app:sip_sdr_cases}
The concrete instantiations of these validation rules across the eight canonical execution paths (Case 1 to Case 8) are summarized in Table \ref{tab:covered_cases} ; files can be found in the code repository \url{<...>} ; scenarios are detailed below.

\begin{table*}[width=.9\textwidth, cols=6, pos=tbp]
\caption{Summary of Covered Cases (T1 Source Mapping)}
\label{tab:covered_cases}
\begin{tabular*}{\tblwidth}{@{} LLLLLL @{}}
\toprule
Case & CSM Variant & Support & LCDM & LCRM & SDR \\
\midrule
1 & V1 & \{LCD, LCR\} & SUCCESS & SUCCESS & --- \\
2 & V2 multi-proto & \{LCD, LCR\} + \{LCD\} & SUCCESS & SUCCESS / null & --- \\
3 & V2 & --- & not invoked & not invoked & CSM top-level \\
4 & V1 & \{LCD, LCR\} & HALTED & null (not invoked) & LCDM level \\
5 & V1 & \{LCD, LCR\} & SUCCESS & HALTED & LCRM level \\
6 & V1 & \{LCR\} & LIFTED ($\iota$) & SUCCESS & --- \\
7 & V1 & \{LCD, LCR\} $\times$ 2 proto & SUCCESS / HALTED & SUCCESS / null & LCDM level (proto 2) \\
8 & V1 & \{LCD, LCR\} & SUCCESS (CODE\_FAIL) & HALTED & LCRM level \\
\bottomrule
\end{tabular*}
\end{table*}

\begin{enumerate}[\bf \text{Case} 1]
    \item Lung, CT + clinical note, comparison with baseline ($T=3$). V1 activated by the clinician. LCDM and LCRM succeed. Output type = structural $\times$ decisional for LCDM, decisional for LCRM.
    \item Thoracic CT with mediastinal adenopathies. V2 activates two protocols above $\tau=0.70$. \\
    \texttt{lymphoma\_pipeline} is support=\{LCD\} (diagnosis only).
    \item Insufficient data (clinical note only), no V2 protocol exceeds $\tau$. CSM emits $\bot$. No downstream module invoked.
    \item V1 activated, CSM succeeds. \\
    LCDM breast\_pipeline emits $\bot$: $T=1$, no prior MRI baseline to calculate the temporal derivatives required by the protocol. LCRM not invoked (its precondition --- valid LCDM output --- is not met).\\
    The distinction \texttt{lcrm\_output: null} (not invoked) vs \texttt{lcrm\_output.status: "HALTED"} (invoked but failed) is semantically important for FHIR: in the first case, no LCRM SDR Task is created.
    \item LCDM succeeds (NSCLC, $P=0.61$, $\kappa=\text{inferred}$). LCRM emits $\bot$: the synthesis policy \texttt{synth\_k} of the \texttt{lung\_rads\_pipeline} protocol only allows therapeutic routing if $\kappa \geq \text{suspected}$; the LCDM entry with $\kappa=\text{inferred}$ is insufficient. The LCDM output remains valid in the bypass.
    \item NSCLC confirmed by prior biopsy ($j=2$, pathology/WSI, $\kappa=\text{confirmed}$). The clinician activates \texttt{nsclc\_treatment\_pipeline} (support=\{LCR\}). The LCDM is bypassed; the lifting $\iota$ maps entry $j=2$ to the LCDM-output type (point-mass distribution, $\sigma$ inherited from $j=2$, see D9). The LCRM produces a first-line recommendation.
    \item V1 activates two protocols.\\
    \texttt{lung\_rads\_pipeline}: LCDM SUCCESS + LCRM SUCCESS. \texttt{lymphoma\_pipeline}: LCDM HALTED (required mediastinal CT absent, $T=1$, only documentation/note available --- input in $\mathcal{D}_\bot^{(\text{lymphoma})}$), LCRM not invoked. Both protocols coexist \\
    in \texttt{protocol\_outputs}. Demonstrates Def.~\ref{def:lcwm_sdr_sip-sdr} non-termination clause and P3.2/P4.2.
    \item V1, single protocol. The LCDM receives a valid input (not in $\mathcal{D}_\bot^{(k)}$): the CT is present and well-formed. Internal inference fails (e.g., model convergence failure). Per D12 and Def.~\ref{def:lcdm_lcrm-mod_inter} remark: no $\bot$, no SDR. The LCDM returns \texttt{status: "SUCCESS"} with full probability mass on \texttt{CODE\_FAIL} --- a class declared \emph{a priori} in \texttt{class\_catalog\_L\_k}. The LCRM precondition (requires $\kappa \in \{\text{confirmed}, \text{suspected}\}$) is not met by a \texttt{CODE\_FAIL} output ($\kappa = \text{inferred}$): LCRM emits $\bot$ and a protocol-level SDR is generated.
\end{enumerate}

\section{Proof-of-Concept Protocol and Detailed Results}
\label{app:poc}

\paragraph{Environment.}
All scenarios use the deterministic variant V1 on a single lung protocol. The diagnostic module is instantiated with declared reference stubs conforming to $\mathcal{I}_{LCD}^{(\mathrm{lung})}$; for S2 these are two independently labeled stub instances (Stub A, Stub B) emitting distinct simulated diagnostic content, without executing any underlying neural architecture. The remedy module follows the same design, declared as a rule-based stub (Lung-RADS-style criteria). Inputs are synthetic entries tagged with lung CT provenance ($\rho = \text{imaging/CT}$); the mock protocol $p_{\mathrm{mock}}$ in S4 is a declared baseline diagnostic stub. The configuration isolates orchestration behaviour and reports no classifier-level metric.

\paragraph{Metric definitions.}
\begin{itemize}
  \item \textbf{S1.} \emph{completion rate}: fraction of runs reaching the LCWM without $\bot$; \emph{SIP schema valid}: payload conforms to the schema of \ref{app:sip_sdr_design}; \emph{$\sigma_{LCD}$ correct}: emitted signature equals the protocol-declared diagnostic signature.
  \item \textbf{S2.} \emph{$\pi$-equality}: Boolean equality of the orchestration-structural projection (activation set, routing, $\bot$-set, schema) across the model pair; \emph{content difference}: $\tilde{y}_{LCD}^{A}\neq\tilde{y}_{LCD}^{B}$.
  \item \textbf{S3 (module-level).} $\text{SDR recall} \\=(\bot\ \text{raised})\wedge(\text{source correct})\wedge   (\text{SDR emitted})\wedge(\neg\,\text{generic request})$. Both the corrupted-CT and missing-modality modes raise $\bot$ at the diagnostic module, producing a protocol-level SDR with $\text{source}=\textsf{LCDM}$ (consistent with Q1: under V1 the CSM is content-blind and never originates $\bot$).
  \item \textbf{S3 (type invariant).} For an empty history ($T=0$) the input violates $T\geq 1$ (Def. \ref{def:entry-lca_input}) and is rejected at construction before any module runs. Recorded values: $\textsf{bottom\_raised}=\text{true}$, $\textsf{bottom\_source\_correct}=\text{null}$ (no module reached \textsf{HALTED}), $\textsf{sdr\_emitted}=\text{false}$ (a construction error is not a Def.~\ref{def:lcwm_sdr_sip-sdr} SDR), $\textsf{sdr\_request\_is\_generic}=\text{null}$. This is a type-safety property, reported separately from SDR recall.
  \item \textbf{S4.} \emph{branch independence}: each activated protocol produces its outputs without reference to the other; \emph{composite schema valid}: the multi-protocol payload conforms to the schema; \emph{cardinality}: $|\textsf{protocol\_outputs}|=K=2$.
\end{itemize}

\paragraph{Sample sizes and latency.}
$N=10$ for every scenario and failure mode (S2: $10$ model pairs). Latency is measured per module as wall-clock mean $\pm$ std over the $N=10$ S1 runs (Table~\ref{tab:poc-s1}); the orchestration-only cost (DPM, CSM, LCWM) is $\approx 0.04$~ms. The sub-millisecond total is negligible relative to production inference times ($10^{2}$--$10^{3}$~ms), which the lightweight reference components do not reproduce.

\paragraph{Multi-protocol latency projection.} For $K$ activated protocols, the reference implementation of the PoC executes the LCDM/LCRM branches sequentially -- an engineering choice of the current codebase, not a structural requirement of the framework:
\begin{align}
T_{LCA}^{seq} &= T_{DPM} + T_{CSM} + \notag \\
&\sum_{k=1}^{K}\left(T_{LCDM}^{(k)} + T_{LCRM}^{(k)}\right) + T_{LCWM}
\end{align}
Because branch independence is established both formally (P3.2, P4.2) and empirically (S4, Table~\ref{tab:poc}: 100\% branch independence for $K=2$), concurrent execution across the $K$ branches is architecturally admissible and yields the lower bound:
\begin{align}
T_{LCA}^{par} &= T_{DPM} + T_{CSM} + \notag \\
& \max_{k=1,\dots,K}\left(T_{LCDM}^{(k)} + T_{LCRM}^{(k)}\right) + T_{LCWM}
\end{align}
The present PoC reports $T_{LCA}^{seq}$ as measured for the single-protocol case (Table~\ref{tab:poc-s1}); benchmarking $T_{LCA}^{par}$ under concurrent execution across multiple activated protocols is left to future implementation work.

\printcredits

\section*{Declarations}
\label{sec:declarations}

\subsection*{Ethics approval and consent to participate}
\label{subsec:declarations-ethical}
This study used no real patient data. All Proof-of-Concept inputs were synthetically generated to exercise the orchestration pipeline's structural and control-flow properties. No human subject data was collected, and no interventions were performed on human participants. Therefore, institutional review board (IRB) approval and explicit patient informed consent were not required for this theoretical and computational framework validation.

\subsection*{Data availability}
\label{subsec:declarations-data_avail}
No external datasets were used in this study. All Proof-of-Concept inputs were synthetically generated; the generation logic is included in the code repository referenced below.

\subsection*{Code availability}
\label{subsec:declarations-code_avail}
To ensure complete reproducibility and support the algorithmic impermeability claim, the overarching orchestration code, the declared stubs, and the finalized Standardized Intermediate Payload (SIP) and Supplementary Data Request (SDR) JSON schemas used during the Proof of Concept are publicly accessible. The repository can be found at: \url{https://github.com/MARRAKCHIGhassen/lca}.

\subsection*{Declaration of Competing Interest}
\label{subsec:declarations-coi}
The authors declare that they have no known competing financial interests or personal relationships that could have appeared to influence the work reported in this paper.

\subsection*{Funding}
\label{subsec:declarations-fund}
This research did not receive any specific grant from funding agencies in the public, commercial, or not-for-profit sectors.

\subsection*{Declaration of generative AI and AI-assisted technologies in the writing process}
\label{subsec:declarations-gen_ai}
During the preparation of this work the authors used Gemini By Google and Claude by Anthropic, in order to improve language readability, structure, and academic translation. After using this tool/service, the authors reviewed and edited the content as needed and take full responsibility for the content of the publication.

\bibliographystyle{cas-model2-names}

\bibliography{bibliography}

@article{GD_Net,
  author    = {Lin, J. and Deng, W. and Wei, J. and Zheng, J. and Chen, K. and Chai, H. and Zeng, T. and Tang, H.},
  title     = {GD-Net: An Integrated Multimodal Information Model Based on Deep Learning for Cancer Outcome Prediction and Informative Feature Selection},
  journal   = {Journal of Cellular and Molecular Medicine},
  year      = {2024},
  volume    = {28},
  number    = {23},
  pages     = {e70221},
  doi       = {10.1111/jcmm.70221},
  pmid      = {39628446},
  pmcid     = {PMC11615516},
  url       = {https://doi.org/10.1111/jcmm.70221}
}

@article{DeepOmix,
  author    = {Zhang, B. and Wan, Z. and Luo, Y. and Zhao, X. and Samayoa, J. and Zhao, W. and Wu, S.},
  title     = {Multimodal integration strategies for clinical application in oncology},
  journal   = {Frontiers in Pharmacology},
  year      = {2025},
  volume    = {16},
  pages     = {1609079},
  doi       = {10.3389/fphar.2025.1609079},
  url       = {https://doi.org/10.3389/fphar.2025.1609079}
}

@article{DeepClinMed_PGM,
  author    = {Wang, Z. and Lin, R. and Li, Y. and Zeng, J. and Chen, Y. and Ouyang, W. and Li, H. and Jia, X. and Lai, Z. and Yu, Y. and Yao, H. and Su, W.},
  title     = {Deep learning-based multi-modal data integration enhancing breast cancer disease-free survival prediction},
  journal   = {Precision Clinical Medicine},
  year      = {2024},
  volume    = {7},
  number    = {2},
  pages     = {pbae012},
  doi       = {10.1093/pcmedi/pbae012},
  pmid      = {38912415},
  pmcid     = {PMC11190375},
  url       = {https://doi.org/10.1093/pcmedi/pbae012}
}

@article{DeepMultimodal_Colorectal,
  author    = {Wang, N. and Lin, J. and Li, W. and Lyu, Y. and Jiang, Y. and Ni, Z. and Huang, Q. and Chen, H. and Yan, Q. and Huang, C.},
  title     = {Deep multimodal state-space fusion of endoscopic-radiomic and clinical data for survival prediction in colorectal cancer},
  journal   = {npj Digital Medicine},
  year      = {2025},
  volume    = {8},
  number    = {1},
  pages     = {801},
  doi       = {10.1038/s41746-025-02236-3},
  url       = {https://doi.org/10.1038/s41746-025-02236-3}
}

@article{Multimodal_Review_2024,
  author    = {Waqas, A. and Tripathi, A. and Ramachandran, R. P. and Stewart, P. A. and Rasool, G.},
  title     = {Multimodal data integration for oncology in the era of deep neural networks: a review},
  journal   = {Frontiers in Artificial Intelligence},
  year      = {2024},
  volume    = {7},
  pages     = {1408843},
  doi       = {10.3389/frai.2024.1408843},
  url       = {https://doi.org/10.3389/frai.2024.1408843}
}

@article{Precision_Oncology_Survey,
  author    = {Yang, H. and Yang, M. and Chen, J. and Yao, G. and Zou, Q. and Jia, L.},
  title     = {Multimodal deep learning approaches for precision oncology: a comprehensive review},
  journal   = {Briefings in Bioinformatics},
  year      = {2025},
  volume    = {26},
  number    = {1},
  pages     = {bbae699},
  doi       = {10.1093/bib/bbae699},
  url       = {https://doi.org/10.1093/bib/bbae699}
}

@article{MUSK_Stanford,
  author    = {Xiang, J. and Wang, X. and Zhang, X. and Xi, Y. and Eweje, F. and Chen, Y. and Li, Y. and Bergstrom, C. and Gopaulchan, M. and Kim, T. and Yu, K. and Willens, S. and Olguin, F. M. and Nirschl, J. J. and Neal, J. and Diehn, M. and Yang, S. and Li, R.},
  title     = {A vision--language foundation model for precision oncology},
  journal   = {Nature},
  year      = {2025},
  volume    = {638},
  number    = {8051},
  pages     = {769--778},
  doi       = {10.1038/s41586-024-08378-w},
  url       = {https://doi.org/10.1038/s41586-024-08378-w}
}

@article{Multimodal_Radiotherapy_NSCLC,
  author    = {Niecikowski, A. and Gupta, S. and Suarez, G. and Kim, J. and Chen, H. and Guo, F. and Long, W. and Deng, J.},
  title     = {A Multi-Modal Deep Learning-Based Decision Support System for Individualized Radiotherapy of Non-Small Cell Lung Cancer},
  journal   = {International Journal of Radiation Oncology Biology Physics},
  year      = {2022},
  volume    = {114},
  number    = {3, Supplement},
  pages     = {e100--e101},
  doi       = {10.1016/j.ijrobp.2022.07.894},
  url       = {https://doi.org/10.1016/j.ijrobp.2022.07.894},
  note      = {ASTRO Annual 2022 Meeting}
}

@article{Lee2010,
  author    = {Lee, Jaehoon and Kim, JeongAh and Cho, Insook and Kim, Yoon},
  title     = {Integration of workflow and rule engines for clinical decision support services},
  journal   = {Studies in Health Technology and Informatics},
  volume    = {160},
  number    = {Pt 2},
  pages     = {811--815},
  year      = {2010},
  issn      = {0926-9630},
  publisher = {Netherlands},
  pmid      = {20841798},
  doi       = {},
  url       = {https://pubmed.ncbi.nlm.nih.gov/20841798/}
}

@article{Huser2011,
  author    = {Huser, Vojtech and Rasmussen, Luke V and Oberg, Ryan and Starren, Justin B},
  title     = {Implementation of workflow engine technology to deliver basic clinical decision support functionality},
  journal   = {BMC Medical Research Methodology},
  volume    = {11},
  pages     = {43},
  year      = {2011},
  month     = {Apr},
  issn      = {1471-2288},
  doi       = {10.1186/1471-2288-11-43},
  pmid      = {21477364},
  pmc       = {PMC3079703},
  url       = {https://pubmed.ncbi.nlm.nih.gov/21477364/}
}

@article{carbonaro2025raw,
  title={From raw data to research-ready: A FHIR-based transformation pipeline in a real-world oncology setting},
  author={Carbonaro, Antonella and Giorgetti, Luca and Ridolfi, Lorenzo and Pasolini, Roberto and Pagliarani, Andrea and Cavallucci, Martina and Andal{\`o}, Alice and Del Gaudio, Livia and De Angelis, Paolo and Vespignani, Roberto and others},
  journal={Computers in Biology and Medicine},
  volume={197},
  pages={111051},
  year={2025},
  publisher={Elsevier}
}

@article{Jung2022,
  author  = {Jung, S and Bae, S and Seong, D and Oh, O and Kim, Y and Yi, B},
  title   = {Shared Interoperable Clinical Decision Support Service for Drug-Allergy Interaction Checks: Implementation Study},
  journal = {JMIR Medical Informatics},
  year    = {2022},
  volume  = {10},
  number  = {11},
  pages   = {e40338},
  doi     = {10.2196/40338},
  url     = {https://medinform.jmir.org/2022/11/e40338}
}

@article{bayor2025designing,
  title={Designing clinical decision support systems (CDSS)—A user-centered lens of the design characteristics, challenges, and implications: Systematic review},
  author={Bayor, Andrew A and Li, Jane and Yang, Ian A and Varnfield, Marlien},
  journal={Journal of Medical Internet Research},
  volume={27},
  pages={e63733},
  year={2025},
  publisher={JMIR Publications Toronto, Canada}
}

@article{lopez2026empirical,
  title={An Empirical Analysis of Calibration and Selective Prediction in Multimodal Clinical Condition Classification},
  author={L{\'o}pez, L and Shamout, Farah E and Rudner, Tim GJ},
  journal={arXiv preprint arXiv:2603.02719},
  year={2026}
}



\end{document}